
\documentclass{article}




\usepackage[accepted]{icml2020}



\usepackage{graphicx}

\usepackage{algorithm} 
\usepackage[noend]{algpseudocode} 

\usepackage{amssymb}
\usepackage{hyperref}
\usepackage{enumitem} 
\usepackage{amsmath} 
\usepackage[tbtags]{mathtools}

\usepackage{multirow}
\usepackage{subcaption} 
\usepackage{makecell}

\usepackage{amsthm}
\usepackage{booktabs}
\usepackage{balance}




\usepackage[T1]{fontenc}
\usepackage{CJKutf8}
\usepackage[english]{babel}

\usepackage[all]{nowidow}

\theoremstyle{definition}
\newtheorem{definition}{Definition}
\newtheorem{theorem}{Theorem}
\newtheorem{lemma}{Lemma}

\newtheorem{remark}{Remark}


\algblockdefx[If]{If}{EndIf}[1]{\textbf{if} #1 \textbf{then}}{end if}
\algblockdefx[Foreach]{Foreach}{EndForeach}[1]{\textbf{for each} #1 \textbf{do}}{end foreach}
\algblockdefx[ParForeach]{ParForeach}{EndParForeach}[1]{\textbf{for each} #1 \textbf{in parallel do}}{end foreach}
\algblockdefx[While]{While}{EndWhile}[1]{\textbf{while} #1 \textbf{do}}{end while}
\algblockdefx[Until]{Until}{EndUntil}[1]{\textbf{until} #1 \textbf{do}}{end until}
\algblockdefx[Function]{Function}{EndFunction}[2]{\textbf{function} #1(#2)}{end function}
\algblockdefx[Procedure]{Procedure}{EndProcedure}[2]{\textbf{procedure} #1(#2)}{end procedure}

\makeatletter
\ifthenelse{\equal{\ALG@noend}{t}}%
  {\algtext*{EndIf}}
  {}%
\ifthenelse{\equal{\ALG@noend}{t}}%
  {\algtext*{EndForeach}}
  {}%
\ifthenelse{\equal{\ALG@noend}{t}}%
  {\algtext*{EndParForeach}}
  {}%
\ifthenelse{\equal{\ALG@noend}{t}}%
  {\algtext*{EndWhile}}
  {}%
\ifthenelse{\equal{\ALG@noend}{t}}%
  {\algtext*{EndUntil}}
  {}%
\ifthenelse{\equal{\ALG@noend}{t}}%
  {\algtext*{EndFunction}}
  {}%
\ifthenelse{\equal{\ALG@noend}{t}}%
  {\algtext*{EndProcedure}}
  {}%
\makeatother

\let\oldReturn\Return
\renewcommand{\Return}{\State\oldReturn}

\setlength{\tabcolsep}{4pt} 

\usepackage{array}

\newcommand{\g}{\nabla}

\DeclarePairedDelimiter\p{\lparen}{\rparen}
\DeclarePairedDelimiter\ang{\langle}{\rangle} 
\DeclarePairedDelimiter\abs{\lvert}{\rvert}   
\DeclarePairedDelimiter\norm{\lVert}{\rVert}  
\DeclarePairedDelimiter\bkt{[}{]}             
\DeclarePairedDelimiter\set{\{}{\}}           
\DeclarePairedDelimiter\ceil{\lceil}{\rceil}
\DeclarePairedDelimiter\floor{\lfloor}{\rfloor}

\makeatletter
\let\oldp\p \def\p{\@ifstar{\oldp}{\oldp*}}
\let\oldang\ang \def\ang{\@ifstar{\oldang}{\oldang*}}
\let\oldabs\abs \def\abs{\@ifstar{\oldabs}{\oldabs*}}
\let\oldnorm\norm \def\norm{\@ifstar{\oldnorm}{\oldnorm*}}
\let\oldbkt\bkt \def\bkt{\@ifstar{\oldbkt}{\oldbkt*}}
\let\oldset\set \def\set{\@ifstar{\oldset}{\oldset*}}
\let\oldceil\ceil \def\ceil{\@ifstar{\oldceil}{\oldceil*}}
\let\oldfloor\floor \def\floor{\@ifstar{\oldfloor}{\oldfloor*}}
\makeatother

\newcommand{\normal}[1]{\normalfont{\text{#1}}}
\renewcommand{\v}{\normalfont{\textbf{v}}}
\newcommand{\w}{\normalfont{\textbf{w}}}
\newcommand{\x}{\normalfont{\textbf{x}}}
\newcommand{\z}{\normalfont{\textbf{z}}}

\def\gD{\mathcal{D}}
\def\gN{\mathcal{N}}
\def\gK{\mathcal{K}}

\DeclareMathOperator*{\argmin}{arg\,min}

\newcommand{\assign}{\leftarrow}

\usepackage{tikz}
\newcommand*\circled[1]{\tikz[baseline=(char.base)]{\node[shape=circle,draw,inner sep=0.2pt] (char) {\normal{#1}};}}

\usepackage{upgreek} 


\icmltitlerunning{Accurate and Fast Federated Learning via IID and Communication-Aware Grouping}

\begin{document}

\twocolumn[
\icmltitle{Accurate and Fast Federated Learning \\ via IID and Communication-Aware Grouping}



\icmlsetsymbol{equal}{*}

\begin{icmlauthorlist}
\icmlauthor{Jin-woo Lee}{to}
\icmlauthor{Jaehoon Oh}{to}
\icmlauthor{Yooju Shin}{to}
\icmlauthor{Jae-Gil Lee}{to}
\icmlauthor{Se-Young Yoon}{to}
\end{icmlauthorlist}

\icmlaffiliation{to}{Korea Advanced Institute of Science and Technology}

\icmlcorrespondingauthor{Jin-woo Lee}{jinwoo.lee@kaist.ac.kr}

\icmlkeywords{Machine Learning, ICML}

\vskip 0.3in
]



\printAffiliationsAndNotice{\icmlEqualContribution} 

\begin{abstract}
Federated learning has emerged as a new paradigm of collaborative machine learning; however, it has also faced several challenges such as non-independent and identically distributed\,(IID) data and high communication cost. To this end, we propose a novel framework of \emph{IID and communication-aware group federated learning} that simultaneously maximizes both accuracy and communication speed by grouping nodes based on data distributions and physical locations of the nodes. Furthermore, we provide a formal convergence analysis and an efficient optimization algorithm called \textbf{\textit{FedAvg-IC}}. Experimental results show that, compared with the state-of-the-art algorithms, FedAvg-IC improved the test accuracy by up to $22.2\%$ and simultaneously reduced the communication time to as small as $12\%$.
\end{abstract}

\section{Introduction}
\emph{Federated learning}\,\citep{FSVRG,FedAvg} enables mobile devices to collaboratively learn a shared model while keeping all training data on the devices, thus avoiding transferring data to the cloud or central server. In this framework, a \emph{local} model is updated using the data on each device, and all local updates are periodically aggregated to the \emph{global} model; then, each local model is synchronized with the global model. Federated learning is attracting more attention, as indicated by the recent release of TensorFlow Federated\,(TFF) in March 2019\,\citep{TFF}. One of the main reasons for this recent boom in federated learning is that it does not compromise user privacy.
However, there are several challenges despite federated learning's growing popularity. \citet{FedAvg} pointed out that federated learning has three unique properties: non-independent and identically distributed\,(IID), unbalanced, and massively-distributed. In this study, we tackle the challenges for the non-IID and massively-distributed properties as follows:
\begin{itemize}[noitemsep,leftmargin=10pt,nosep]
\label{Challenges}
\item
    \textbf{Non-IID Challenge}: Because each mobile device typically stores the data generated by a particular user, each local data distribution does \emph{not} represent the global population distribution. This non-IID property definitely hinders the convergence of federated learning and degrades prediction accuracy.
\item
    \textbf{Limited Communication Challenge}: Because several thousands of devices typically participate in federated learning, the training process is massively distributed, thus causing a huge burden on the backbone\,(wireless) network\,\citep{park2018wireless}.
\end{itemize}

To the best of our knowledge, no existing work has addressed both of the above challenges simultaneously. However, there have been active studies on each challenge. Notably, \citet{HierSGD} proposed a group-based learning algorithm, where the nodes\,(i.e., devices) are grouped into node groups; the local models are first aggregated to a \emph{group} model, and the group models are then aggregated to the global model. While the group-based learning relieves the non-IID issue, it may cause high communication overheads especially if far-away nodes belong to the same node group. \citet{AdaptiveFL} proposed a resource-constrained optimization algorithm to optimize the number of communication rounds but did not address the non-IID issue. In contrast, \citet{IIDSharing} proposed a data sharing strategy that distributes a small subset of global data to all nodes for resolving the non-IIDness, but the additional global communication cost is not seriously considered and it somewhat violates the philosophy of federated learning.

\begin{figure}[t!]
    \centering
    \includegraphics[width=0.85\columnwidth]{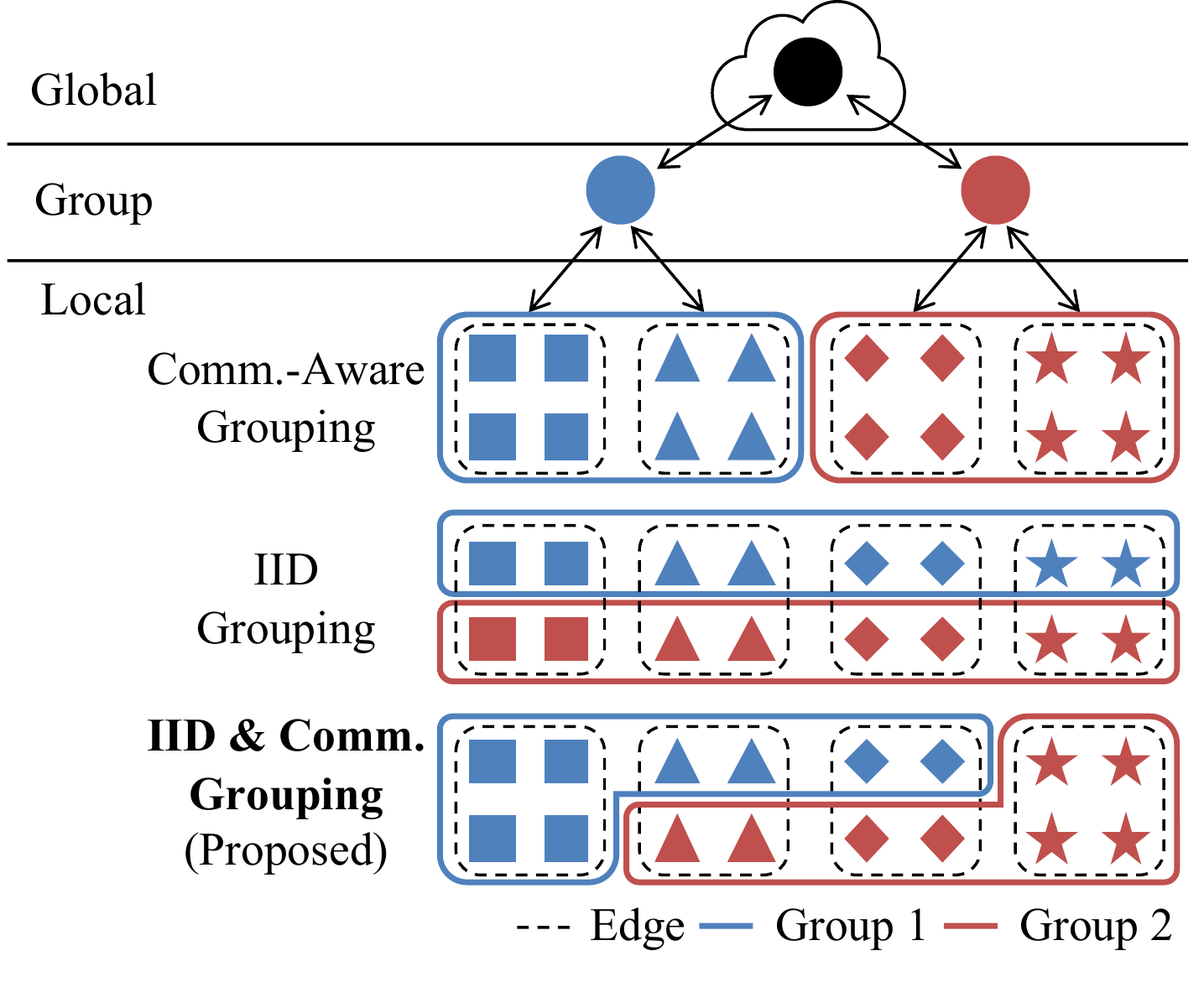}
    \vspace*{-0.4cm}
    \caption{Concept of IID and communication-aware group federated learning.}
    \label{fig:Overview}
    \vspace*{-0.8cm}
\end{figure}

In this paper, we propose a novel framework of \emph{IID and communication-aware group federated learning} to address both challenges. Here, nodes are grouped by the \emph{IID and communication-aware grouping} principle to make the data distribution of each group closer to the global IID data distribution and to reduce node-to-group communication simultaneously. \autoref{fig:Overview}, where the data distributions are distinguished by their different shapes, illustrates the proposed framework as well as two simple alternatives. Communication-aware grouping concentrates on the limited communication challenge at the cost of accuracy, and IID grouping concentrates on the non-IID challenge at the cost of efficiency. On the other hand, our proposed framework aims at presenting a \emph{hybrid} of the two extreme cases.
Overall, the key contributions are summarized as follows:

\begin{itemize}[noitemsep,leftmargin=10pt,nosep]
\label{Contributions}
\item
    \textbf{Problem Formulation}\,(Section \ref{sec:ProblemSetting}): We formulate the problem as a bi-objective optimization that determines node groups by considering the difference in data distribution for the local-to-group and group-to-global levels as well as the communication delay based on the physical locations of nodes.
\item
    \textbf{Convergence Analysis}\,(Section \ref{sec:TheoreticalAnalysis}):
    We formally derive the convergence bound of group federated learning.
    As per our analysis, the optimal node grouping is achieved when the difference in data distribution for the group-to-global level and the group communication delay are simultaneously minimized.
\item
    \textbf{Optimization Algorithm}\,(Section \ref{sec:Optimization}): We design a control algorithm, called \textit{FedAvg-IC}, to find the near-optimal node grouping that minimizes both IID and communication costs.
\item
    \textbf{High Performance}\,(Section \ref{sec:Evaluation}): We empirically compared \textit{FedAvg-IC} with three federated learning algorithms on four benchmark datasets. \textit{FedAvg-IC} reached a higher accuracy by up to $22.2\%$ and simultaneously reduced communication time to as small as $12\%$, compared with the three selected algorithms.
\end{itemize}
\vspace*{-0.2cm}
\section{Preliminaries and Related Work}
In this section, we first briefly describe \emph{federated learning} and then survey relevant studies that handle either \emph{non-IIDness} or \emph{limited communication}.

\vspace*{-0.2cm}
\subsection{Basics of Federated Learning}
The objective of \emph{federated learning} is to find an approximate solution of Eq.~\eqref{eq:FLP}\,\citep{FedAvg}. Here, $F(\w)$ is the loss of predictions with a model $ \w $ over the set of all data examples $ \gD \triangleq \cup_{ i \in \gN }{ \gD_i } $ across all nodes, where $ \gN $ is the set of node indices, $ F_i\p{\w} \triangleq \sum_{ j \in \gD_i }{ \frac{ 1 }{ \abs{\gD_i} } F_{ij}\p{\w} } $ is the loss of predictions with $ \w $ over the set of data examples $ \gD_i $ on the $ i $-th node, and $ F_{ij}\p{\w} \triangleq l\p{ \w, \x_{ij}, y_{ij} } $ is the loss of a prediction with $ \w $ on the $ j $-th data example $ \p{ \x_{ij}, y_{ij} } $ on the $ i $-th node.
\vspace*{-0.2cm}
\begin{equation}
\label{eq:FLP}
    \w^* \triangleq \argmin_{ \w \in \mathbb{R}^d }{ F\p{\w} } ~~ \normal{where} ~~ F\p{\w} \triangleq \sum_{ i \in \gN }{ \frac{ \abs{\gD_i} }{ \abs{\gD} } F_i\p{\w} }
\vspace*{-0.1cm}
\end{equation}
\emph{Federated Averaging\,(FedAvg)}\,\citep{FedAvg}, which is the canonical algorithm for federated learning in Eq.~\eqref{eq:FLP}, involves \emph{local update}, which learns a \emph{local model} $ \w_i $ at the $ i $-th node by performing gradient descent steps, and \emph{global aggregation}, which learns the \emph{global model} $ \w $ by aggregating all $ \w_i $ and synchronizes $ \w_i $ with $ \w $ every $ \uptau $ steps, as shown in Eq.~\eqref{eq:w_FedAvg}.
\vspace*{-0.2cm}
\begin{equation}
\label{eq:w_FedAvg}
\resizebox{.9\columnwidth}{!}{$
\begin{multlined}
    \hspace{-2pt} \w_i\p{t} \hspace{-1pt} \triangleq \hspace{-2pt} \begin{dcases}
        \w_i\p{t \hspace{-1pt} - \hspace{-1pt} 1} \hspace{-1pt} - \hspace{-1pt} \eta\g F_i\p{\w_i\p{t \hspace{-1pt} - \hspace{-1pt} 1}} \hspace{-6pt} & \normal{if} ~ t \normal{ mod } \uptau \ne 0 \\
        \w\p{t} & \normal{if} ~ t \normal{ mod } \uptau = 0
    \end{dcases} \\
    \normal{where} \hspace{5pt} \w\p{t} \triangleq \sum\limits_{ i \in N }{ \frac{ \abs{\gD_i} }{ \abs{\gD} }\bkt{ \w_i\p{t-1} - \eta\g F_i\p{\w_i\p{t-1}} } }
\end{multlined}
$}
\end{equation}

\vspace*{-0.5cm}
\subsection{Related Work on the Non-IID Challenge}

\subsubsection{Group Federated Learning}
To reduce the learning divergence between $ \w_i $ and $ \w $ in Eq.~\eqref{eq:w_FedAvg}, \citet{HierSGD} proposed a group-based architecture of allowing multiple intermediate aggregations before a global aggregation. Formally speaking, the set of all node indices $ \gN $ is partitioned into sets of node indices for $ \abs{\gK} $ \emph{node groups} $ \{\gN^1, \gN^2, \cdots , \gN^{\abs{\gK}} \} $, i.e., $ \cup_{ k \in \gK }{ \gN^k } = \gN $ and $ \forall k \ne l $, $ \gN^k \cap \gN^l = \emptyset $. Additionally, let $\gD^k$ be the set of data examples on the $k$-th node group and $\gD^k_i$ be its subset of $\gD^k$ on the $i$-th node. Then, the loss function of Eq.~\eqref{eq:FLP} is extended to that of Eq.~\eqref{eq:loss_GroupFL} by considering the node groups.
\vspace*{-0.2cm}
\begin{equation}
\label{eq:loss_GroupFL}
\resizebox{.9\columnwidth}{!}{$
    \hspace{-2pt} F\p{\w} \hspace{-1pt} \triangleq \hspace{-2pt} \sum\limits_{ k \in \gK }{ \frac{ \abs{\gD^k} }{ \abs{\gD} } F^k\p{\w} } ~ , ~ F^k\p{\w} \hspace{-1pt} \triangleq \hspace{-2pt} \sum\limits_{ i \in \gN^k }{ \frac{ \abs{\gD^k_i} }{ \abs{\gD^k} } F^k_i\p{\w} }
$}
\vspace*{-0.3cm}
\end{equation}

Group federated learning was implemented as \emph{hierarchical local SGD}\,\citep{HierSGD}, and it learns the \emph{group model} $ \w^k $ by aggregating all $ \w_i^k $ and synchronizes $ \w_i^k $ with $ \w^k $ every $ \uptau_1 $ steps, which can be expressed as Eq.~\eqref{eq:w_GroupFL}.

\vspace*{-0.4cm}
\begin{equation}
\label{eq:w_GroupFL}
\resizebox{\columnwidth}{!}{$
\begin{multlined}
    \hspace{-2pt} \w^k_i\p{t} \hspace{-1pt} \triangleq \hspace{-2pt} \begin{dcases}
        \w^k_i\p{t \hspace{-1pt} - \hspace{-1pt} 1} \hspace{-1pt} - \hspace{-1pt} \eta\g F^k_i\p{\w^k_i\p{t \hspace{-1pt} - \hspace{-1pt} 1}} \hspace{-6pt} & \normal{if} ~ t \normal{ mod } \uptau_1 \ne 0 \\
        \w^k\p{t} & \normal{if} ~ \begin{aligned}[t]
            & t \normal{ mod } \uptau_1 = 0, \\
            & t \normal{ mod } \uptau_1\uptau_2 \ne 0
        \end{aligned} \\
        \w\p{t} & \normal{if} ~ t \normal{ mod } \uptau_1\uptau_2 = 0
    \end{dcases} \\
    \mbox{where} \hspace{5pt} \w^k\p{t} \hspace{-1pt} \triangleq \hspace{-2pt} \sum\limits_{ i \in \gN^k }{ \hspace{-1pt} \frac{ \abs{\gD^k_i} }{ \abs{\gD^k} }\bkt{ \w^k_i\p{t \hspace{-1pt} - \hspace{-1pt} 1} \hspace{-1pt} - \hspace{-1pt} \eta\g F^k_i\p{\w^k_i\p{t \hspace{-1pt} - \hspace{-1pt} 1}} } } \\
    \mbox{and} \hspace{5pt} \w\p{t} \hspace{-1pt} \triangleq \hspace{-2pt} \sum\limits_{ k \in \gK }{ \sum\limits_{ i \in \gN^k }{ \frac{ \hspace{-1pt} \abs{\gD^k_i} }{ \abs{\gD} }\bkt{ \w^k_i\p{t \hspace{-1pt} - \hspace{-1pt} 1} \hspace{-1pt} - \hspace{-1pt} \eta\g F^k_i\p{\w^k_i\p{t \hspace{-1pt} - \hspace{-1pt} 1}} } } }
\end{multlined}
$}
\end{equation}

\vspace*{-0.4cm}
\subsubsection{Global-Information Sharing}
Sharing global information is effective in mitigating the non-IIDness of a local node. The most common approach is to share a subset of global IID data samples to make the local data distribution closer to the population data distribution\,\citep{IIDSharing,HybridFL}. FSVRG\,\citep{FSVRG} shares a subset of global data features to scale up the feature-related parameters of a local optimizer. FAug\,\citep{FAug} shares a generative model that can produce an augmented IID dataset.

\subsection{Related Work on the Communication Challenge} 


\subsubsection{Communication-Aware Learning}
AdaptiveFL\,\citep{AdaptiveFL} extends FedAvg to adaptively optimize the number of global aggregations by considering resource consumption such as communication. FedCS\,\citep{FedCS} minimizes the overall communication delay for a set of sampled learners by considering a round-trip time constraint.
HierFAVG\,\cite{HierFAVG}, which is the state-of-the-art approach for group federated learning, groups nodes by network edges to facilitate communication between the nodes in proximity.
Similarly, we define a novel optimization problem that considers both IID and communication costs for maximizing accuracy and efficiency of federated learning, as shown in Section \ref{sec:ProblemSetting}.

\subsubsection{Communication Overhead Reduction}
Reducing communication overheads in federated learning usually leads to saving both communication and computation resources. The overheads include the number of participating nodes and the amount of communication data. The participating nodes can be sampled by following a certain probability distribution\,\citep{FedAvg,SampledFL,FedProx}, but this approach is beyond the scope of this paper. Meanwhile, communication data size can be reduced by using a quantization or compression technique\,\cite{konecny2016federatedLearning,sattler2019robust} or by placing intermediate parameter servers in a network topology\,\cite{bonawitz2019towards}. We also attempt to reduce communication data size in Section \ref{sec:Optimization}.

\begin{table}[h!]
\caption{Summary of the notation.}
\centering
\small
\begin{tabular}{>{\centering}p{.17\columnwidth}>{\arraybackslash}p{.73\columnwidth}} \toprule
    \multicolumn{1}{c}{\textbf{Notation}} & \multicolumn{1}{c}{\textbf{Description}} \\ \midrule
    $\w^k_i$      & \emph{Local} model of $i$-th node in $k$-th group \\
    $\w^k$        & \emph{Group} model of $k$-th group \\
    $\w\p{T}$     & \emph{Global} model after $ T $ steps \\
    $\uptau_1$    & \# of \emph{local} updates per group aggregation \\
    $\uptau_2$    & \# of \emph{group} aggregations per global aggregation \\
    $\bkt{r}$     & \emph{G\underline{r}oup} interval, i.e., $ \bkt{ \p{r-1}\uptau_1, r\uptau_1 } $ \\
    $\bkt{l}$     & \emph{G\underline{l}obal} interval, i.e., $ \bkt{ \p{l-1}\uptau_1\uptau_2, l\uptau_1\uptau_2 } $ \\
    $\delta$      & \emph{Local-to-group} divergence \\
    $\Delta$      & \emph{Group-to-global} divergence \\ \bottomrule
\end{tabular}
\label{table:Notation}
\vspace*{-0.4cm}
\end{table}
\section{IID and Communication-Aware Group Federated Learning}
\label{sec:ProblemSetting}
Our primary goal is to train a global model that simultaneously minimizes the global loss in Eq.~\eqref{eq:loss_GroupFL} and the total communication delay by considering the aforementioned challenges, which can be formulated as Eq.~\eqref{eq:Problem}.
\vspace*{-0.1cm}
\begin{equation}
\label{eq:Problem}
    \min_{ \uptau_1, \uptau_2, \abs{\gK}, \z } \set{ F\p{\w\p{T}}, \p{ d_{group}\p{\uptau_2-1} + d_{global} } }
\end{equation}
\vspace*{-0.4cm}
\begin{itemize}[noitemsep,leftmargin=10pt,nosep]
\item
    The \emph{IID objective} is defined as the minimization of global loss after $T$ steps, and the \emph{communication objective} is defined as the minimization of total communication delay, where $d_{group}$ and $d_{global}$ represent the communication delay\,(e.g., in seconds) spent for a single iteration of group and global aggregations, respectively. $d_{group}$ and $d_{global}$ can be easily estimated from a given network topology\,(e.g., by using hop counts\,\citep{Vahdat00epidemicrouting}). $\uptau_2-1$ implies that a global aggregation takes over a group aggregation every $ \uptau_2 $ steps.
\item
    The optimization parameters are the learning steps $\uptau_1$ and $\uptau_2$, the number of node groups $ \abs{\gK} $, and the group membership $ \z \triangleq \p{ z_i | \bkt{ \forall i \in \gN, \exists k \in \gK }\p{ z_i = k } } $.
\end{itemize}

\section{Theoretical Analysis}
\label{sec:TheoreticalAnalysis}
In this section, we provide a theoretical analysis of the IID and communication-aware group federated learning. Based on an assumption and definitions in Section \ref{sec:Assumption}, we analyze the convergence of group federated learning in Section \ref{sec:Convergence} and draw notable remarks for the main problem in Section \ref{sec:Interpretation}. \autoref{table:Notation} summarizes the notation used in this paper.


\vspace*{-0.2cm}
\subsection{Assumption and Definitions}
\label{sec:Assumption}
We make the following assumption for the loss function $ F^k_i $, as in many other relevant studies\,\citep{HierFAVG,AdaptiveFL}.
For every $i$ and $k$, \circled{1} $ F^k_i $ is convex\footnote{We will empirically show that a non-convex function works well in Section \ref{sec:Evaluation}.}; \circled{2} $ F^k_i $ is $ \rho $-Lipschitz, i.e., $ \norm{ F^k_i(\w) - F^k_i(\w') } \le \rho\norm{ \w - \w' } $ for any $\w$ and $ \w' $; and  \circled{3} $ F^k_i $ is $ \beta $-smooth, i.e., $ \norm{ \g F^k_i(\w) - \g F^k_i(\w') } \le \beta\norm{ \w - \w' } $ for any $\w$ and $ \w' $.


Under this assumption, Lemma~\ref{lemma:Assumption} holds for the group and global loss functions.

\begin{lemma}
\label{lemma:Assumption}
    $F$ and $F^k$ are convex, $ \rho $-Lipschitz, and $ \beta $-smooth.
\vspace*{-0.2cm}
\end{lemma}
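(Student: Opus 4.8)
The plan is to show that each of the three properties in the assumption is preserved under the two nested convex combinations that define $F^k$ from the $F^k_i$ and $F$ from the $F^k$, i.e., under taking weighted averages with nonnegative weights summing to one. Since $F^k(\w) = \sum_{i \in \gN^k} \frac{\abs{\gD^k_i}}{\abs{\gD^k}} F^k_i(\w)$ and $F(\w) = \sum_{k \in \gK} \frac{\abs{\gD^k}}{\abs{\gD}} F^k(\w)$, and both $\frac{\abs{\gD^k_i}}{\abs{\gD^k}}$ and $\frac{\abs{\gD^k}}{\abs{\gD}}$ are nonnegative and sum to one over their respective index sets, it suffices to prove the following closure fact: a convex combination of functions that are simultaneously convex, $\rho$-Lipschitz, and $\beta$-smooth is again convex, $\rho$-Lipschitz, and $\beta$-smooth (with the \emph{same} constants). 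Applying this once gives the claim for each $F^k$, and applying it a second time to the $F^k$ gives the claim for $F$.

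The three closure properties are each verified directly. For convexity, I would use that a nonnegative weighted sum of convex functions is convex (pointwise: $\sum_m \lambda_m F_m(\theta \w + (1-\theta)\w') \le \sum_m \lambda_m [\theta F_m(\w) + (1-\theta) F_m(\w')] = \theta \sum_m \lambda_m F_m(\w) + (1-\theta)\sum_m \lambda_m F_m(\w')$). For the $\rho$-Lipschitz property, I would apply the triangle inequality together with $\sum_m \lambda_m = 1$: $\norm{\sum_m \lambda_m F^k_m(\w) - \sum_m \lambda_m F^k_m(\w')} \le \sum_m \lambda_m \norm{F^k_m(\w) - F^k_m(\w')} \le \sum_m \lambda_m \rho \norm{\w - \w'} = \rho \norm{\w - \w'}$. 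For $\beta$-smoothness, I would use linearity of the gradient, $\g\big(\sum_m \lambda_m F^k_m\big) = \sum_m \lambda_m \g F^k_m$, followed by the same triangle-inequality-plus-weights-sum-to-one argument applied to $\norm{\g F^k_m(\w) - \g F^k_m(\w')} \le \beta \norm{\w - \w'}$.

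I do not anticipate a genuine obstacle here; the lemma is essentially a bookkeeping statement. The one point requiring a little care is that the weights in each aggregation level are indeed nonnegative and sum to one — this is immediate from the definitions $\cup_{k\in\gK}\gN^k = \gN$ with the $\gN^k$ disjoint (so $\sum_k \abs{\gD^k} = \abs{\gD}$) and $\cup_i \gD^k_i = \gD^k$ within each group — and that the Lipschitz and smoothness constants do not degrade, which is exactly why summing to one (rather than merely being bounded) is used. The other point worth a sentence is that the two aggregations are genuinely nested, so the closure fact must be invoked twice; stating it once as a general sublemma about convex combinations and then instantiating it for $F^k$ and for $F$ keeps the argument clean.
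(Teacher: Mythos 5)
Your proposal is correct and takes essentially the same approach as the paper, which simply asserts the result is straightforward from the assumption and the definitions of $F$ and $F^k$ in Eq.~\eqref{eq:loss_GroupFL}; you have merely written out the standard closure-under-convex-combination argument that the paper leaves implicit. The key observation you highlight --- that the weights at each aggregation level are nonnegative and sum to one, so the constants $\rho$ and $\beta$ do not degrade --- is exactly what makes the paper's one-line proof valid.
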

\vspace*{-0.3cm}
\begin{proof}
It is straightforward from the aforementioned assumption and the definitions of $F$ and $F^k$ in Eq.~\eqref{eq:loss_GroupFL}.
\vspace*{-0.2cm}
\end{proof}
We introduce two types of intervals depending on the learning level: a \emph{g\textbf{\underline{r}}oup interval}, $ \bkt{r} \triangleq \bkt{ \p{r-1}\uptau_1, r\uptau_1 } $, indicates an interval between two successive \emph{group} aggregations, and a \emph{g\textbf{\underline{l}}obal interval}, $ \bkt{l} \triangleq \bkt{ \p{l-1}\uptau_1\uptau_2, l\uptau_1\uptau_2 } $, indicates an interval between two successive \emph{global} aggregations.

Next, we introduce the notion of \emph{group-based virtual learning} in Definition~\ref{def:v_central}, where training data is assumed to exist on a \emph{virtual} central repository for each model.

\begin{definition}[Group-Based Virtual Learning]
\label{def:v_central}
    Given a certain group membership $\z$, for any $k$, $ \bkt{r} $, and $ \bkt{l} $, the \emph{virtual group model} $ \v^k_{\bkt{r}} $ and \emph{virtual global model} $ \v_{\bkt{l}} $ are updated by performing gradient descent steps on the centralized data examples for $\mathcal{N}^k$ and $\mathcal{N}$, respectively, and synchronized with the federated group model $ \w^k $ and the global model $\w$ at the beginning of each interval, as in Eq.~\eqref{eq:v_central}.
    \begin{equation}
    \label{eq:v_central}
    \resizebox{\columnwidth}{!}{$
    \begin{aligned}
        \v^k_{\bkt{r}}\p{t} \hspace{-1pt} \triangleq \hspace{-2pt} \begin{dcases}
            \w^k\p{t} & \normal{if} ~ t = \p{m \hspace{-1pt} - \hspace{-1pt} 1}\uptau_1, \\
            \v^k_{\bkt{r}}\p{t \hspace{-1pt} - \hspace{-1pt} 1} \hspace{-1pt} - \hspace{-1pt} \eta\g F^k\p{\v^k_{\bkt{r}}\p{t \hspace{-1pt} - \hspace{-1pt} 1}} \hspace{-6pt} & \normal{otherwise}
        \end{dcases} \\
        \v_{\bkt{l}}\p{t} \hspace{-1pt} \triangleq \hspace{-2pt} \begin{dcases}
            \w\p{t} & \normal{if} ~ t = \p{n \hspace{-1pt} - \hspace{-1pt} 1}\uptau_1\uptau_2, \\
            \v_{\bkt{l}}\p{t \hspace{-1pt} - \hspace{-1pt} 1} \hspace{-1pt} - \hspace{-1pt} \eta\g F\p{\v_{\bkt{l}}\p{t \hspace{-1pt} - \hspace{-1pt} 1}} \hspace{-6pt} & \normal{otherwise} ~~~~~~~~ \qed
        \end{dcases}
    \end{aligned}
    $} 
    \end{equation} 
\end{definition}

To facilitate the interpretation, \autoref{fig:wv} shows how a virtual model $\v$ is updated, following Definition~\ref{def:v_central}. For example, $\v_{\left[ l \right]} $ starts diverging from $\w$ after $(l-1)\uptau_1\uptau_2$ and becomes synchronized with $\w$ at $l\uptau_1\uptau_2$.

\begin{figure}[t!]
    \centering
    \includegraphics[width=\columnwidth]{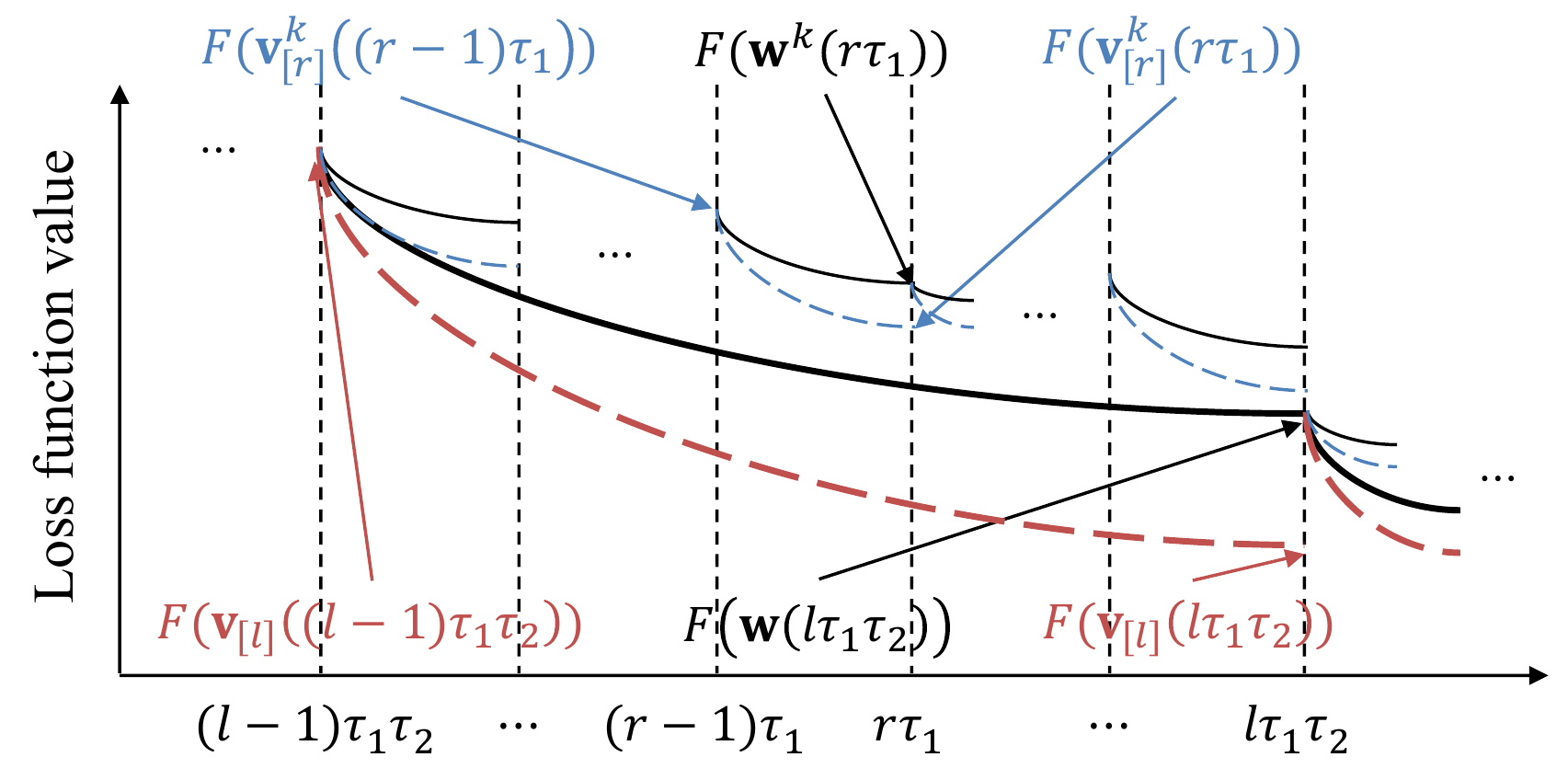}
    \vspace*{-0.6cm}
    \caption{Illustration of loss divergence and synchronization between $ \w^k $ and $ \v^k_{\left[ r \right]} $ and between $\w$ and $ \v_{\left[ l \right]} $.}
    \label{fig:wv}
    \vspace*{-0.4cm}
\end{figure}

Then, we formalize \emph{group-based gradient divergence} in Definition~\ref{def:GradientDivergence}
that models the impact of the difference in data distributions across nodes on federated learning.
\begin{definition}[Group-Based Gradient Divergence]
\label{def:GradientDivergence}
    Given a certain group membership $\z$, for any $i$, $k$, and $\w$, $ \delta^k_i $ is defined as the gradient difference between the $i$-th local loss and the $k$-th group loss; $ \Delta^k $ is defined as the gradient difference between the $k$-th group loss and the global loss, which can be expressed as Eq.~\eqref{eq:GradientDivergence}.
    \begin{equation}
    \begin{split}
    \label{eq:GradientDivergence}
        \delta^k_i \triangleq \norm{ \g F^k_i\p{\w} - \g F^k\p{\w} }, \\
        \Delta^k \triangleq \norm{ \g F^k\p{\w} - \g F\p{\w} }
    \end{split}
    \end{equation}
    Then, the \textit{local-to-group divergence} $\delta$ and the \textit{group-to-global divergence} $\Delta$ are formulated as Eq.\ \eqref{eq:GradientDivergenceSum}.
    \begin{equation}
    \label{eq:GradientDivergenceSum}    
        \delta \triangleq \sum_{ k \in \gK }{ \sum_{ i \in \gN^k }{ \frac{ \abs{\gD^k_i} }{ \abs{\gD} } \delta^k_i } },~~  
        \Delta \triangleq \sum_{ k \in \gK }{ \frac{ \abs{\gD^k} }{ \abs{\gD} } \Delta^k } \qed
    \end{equation}
\end{definition}

\vspace*{-0.4cm}

\subsection{Convergence of the Group Federated Learning}
\label{sec:Convergence}
We provide a proof sketch for the convergence of the global loss $F(\w(T))$ in Appendix~\ref{sec:ProofSketch}, which derives Theorem \ref{thm:ConvUpperBound}.
\begin{theorem}
\label{thm:ConvUpperBound}
Let $ \omega \triangleq \min_{q}{ \frac{ 1 }{ \norm{ \v_{\bkt{l}}\p{\p{l-1}\uptau_1\uptau_2} - \w^* }^2 } } $. When $ \eta \le \frac{ 1 }{ \beta } $, the convergence upper bound of group federated learning after $ T $ steps can be expressed as Eq.~\eqref{eq:ConvUpperBound}.
\begin{equation}
\label{eq:ConvUpperBound}
\begin{split}
    & F\p{\w\p{T}} - F\p{\w^*} \le \frac{1}{2\uptau_1 \uptau_2 \eta \omega} \\
    & + \rho \p{ \frac{ \delta }{ \beta } \p{ \p{ \eta\beta + 1 }^{\uptau_1 } - 1 } + \frac{ \Delta }{ \beta } \p{ \p{ \eta\beta + 1 }^{ \uptau_1\uptau_2 } - 1 } }
\end{split}
\end{equation}
\end{theorem}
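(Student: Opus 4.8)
The plan is to adapt the ``virtual centralized model'' argument used in the analysis of \texttt{FedAvg}-type algorithms, but to run it at the two aggregation levels present in group federated learning. Write $T = L\uptau_1\uptau_2$ for the final global-aggregation step and split
\[
    F\p{\w\p{T}} - F\p{\w^*} = \bkt{F\p{\w\p{T}} - F\p{\v_{\bkt{L}}\p{T}}} + \bkt{F\p{\v_{\bkt{L}}\p{T}} - F\p{\w^*}},
\]
where $\v_{\bkt{L}}$ is the virtual global model of Definition~\ref{def:v_central}: it is synchronized with $\w$ at step $\p{L-1}\uptau_1\uptau_2$ and afterwards runs plain gradient descent on $F$ for $\uptau_1\uptau_2$ steps. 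The second bracket will be bounded by the centralized gradient-descent rate and the first by the Lipschitz constant times a model-distance bound.

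\emph{Centralized term.} By Lemma~\ref{lemma:Assumption}, $F$ is convex and $\beta$-smooth, so the standard convergence estimate for gradient descent with $\eta \le 1/\beta$ gives $F\p{\v_{\bkt{L}}\p{T}} - F\p{\w^*} \le \norm{\v_{\bkt{L}}\p{\p{L-1}\uptau_1\uptau_2} - \w^*}^2 / \p{2\eta\uptau_1\uptau_2}$. Since $\v_{\bkt{L}}\p{\p{L-1}\uptau_1\uptau_2} = \w\p{\p{L-1}\uptau_1\uptau_2}$ and, by the definition of $\omega$, $\norm{\v_{\bkt{l}}\p{\p{l-1}\uptau_1\uptau_2} - \w^*}^2 \le 1/\omega$ for every global interval, this term is at most $\frac{1}{2\uptau_1\uptau_2\eta\omega}$, i.e.\ the first summand of Eq.~\eqref{eq:ConvUpperBound}.

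\emph{Drift term.} By $\rho$-Lipschitzness (Lemma~\ref{lemma:Assumption}), $F\p{\w\p{T}} - F\p{\v_{\bkt{L}}\p{T}} \le \rho\norm{\w\p{T} - \v_{\bkt{L}}\p{T}}$, so it remains to bound $\norm{\w\p{T} - \v_{\bkt{L}}\p{T}}$ by $\frac{\delta}{\beta}\p{\p{\eta\beta+1}^{\uptau_1} - 1} + \frac{\Delta}{\beta}\p{\p{\eta\beta+1}^{\uptau_1\uptau_2} - 1}$. I would interpose the virtual group models $\v^k_{\bkt{r}}$ of Definition~\ref{def:v_central} and use the triangle inequality to separate a \emph{local-to-group} part from a \emph{group-to-global} part. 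For the local-to-group part, fix a group $k$ and a step $t$ inside its current group interval $\bkt{r}$; using $\beta$-smoothness of $F^k_i$ together with $\norm{\g F^k_i\p{\w} - \g F^k\p{\w}} \le \delta^k_i$ one obtains the one-step recursion $\norm{\w^k_i\p{t} - \v^k_{\bkt{r}}\p{t}} \le \p{\eta\beta+1}\norm{\w^k_i\p{t-1} - \v^k_{\bkt{r}}\p{t-1}} + \eta\delta^k_i$, with the left side reset to $0$ at the start of $\bkt{r}$. Unrolling over at most $\uptau_1$ steps and averaging with the weights $\abs{\gD^k_i}/\abs{\gD}$ (which reassemble $\delta$ exactly as in Eq.~\eqref{eq:GradientDivergenceSum}) yields $\frac{\delta}{\beta}\p{\p{\eta\beta+1}^{\uptau_1} - 1}$. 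Running the same recursion one level up — comparing the group-aggregated model with $\v_{\bkt{L}}$, with divergence $\Delta^k$ in place of $\delta^k_i$ and a reset only at the start of each \emph{global} interval, hence over at most $\uptau_1\uptau_2$ steps — produces $\frac{\Delta}{\beta}\p{\p{\eta\beta+1}^{\uptau_1\uptau_2} - 1}$. Adding the two parts and combining with the centralized term gives Eq.~\eqref{eq:ConvUpperBound}.

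\emph{Main obstacle.} The delicate step is the group-to-global estimate, because the virtual group models $\v^k_{\bkt{r}}$ are re-synchronized every $\uptau_1$ steps rather than every $\uptau_1\uptau_2$ steps. A naive recursion therefore picks up one ``correction jump'' at each of the $\uptau_2$ group aggregations inside a global interval, and one must argue — via the non-expansiveness of the gradient-descent map for $\eta \le 1/\beta$, or by carefully tracking which drift is reset at which aggregation — that these jumps telescope into a single $\frac{\Delta}{\beta}\p{\p{\eta\beta+1}^{\uptau_1\uptau_2}-1}$ contribution rather than being multiplied by $\uptau_2$ or by an additional $\p{\eta\beta+1}^{\uptau_1\uptau_2}$ factor. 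Keeping the two levels of weighting consistent, so that the per-node divergences collapse to $\delta$ and the per-group divergences to $\Delta$ precisely as in Definition~\ref{def:GradientDivergence}, is the other place that needs care; the remaining steps are routine bookkeeping.
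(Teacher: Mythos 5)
Your proposal follows essentially the same route as the paper: the same decomposition through the virtual global model, the same convex/$\beta$-smooth gradient-descent estimate for the centralized term (yielding $\frac{1}{2\uptau_1\uptau_2\eta\omega}$), and the same two-level triangle-inequality-plus-recursion argument through the virtual group models for the drift term (the paper's Lemmas 2 and 3, combined via Jensen's inequality and $\rho$-Lipschitzness). The ``main obstacle'' you flag --- the group-to-global recursion in which $\v^k_{\bkt{r}}$ is re-synchronized every $\uptau_1$ steps while $\v_{\bkt{l}}$ is not --- is precisely the step the paper dispatches with the single word ``analogously,'' so your account is, if anything, more candid than the paper's about where the argument still needs care.
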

\vspace*{-1.0cm}
\begin{proof}
Please refer to Appendix~\ref{sec:Proof_ConvUpperBound} for details. ~~~~~~$\Box$ \let\qed\relax
\end{proof}
\vspace*{-0.2cm}

From Eq.~\eqref{eq:ConvUpperBound}, it is straightforward to see that the optimality gap is dominantly affected by $\uptau_1$, $\uptau_2$, $\delta$, and $\Delta$.  Therefore, given small values, the convergence is guaranteed.

\vspace*{-0.2cm}
\subsection{Theoretical Analysis for the Main Problem}
\label{sec:Interpretation}
Based on the convergence analysis, we interpret the IID and communication-aware group federated learning as follows.
\begin{remark}[Dominance of $\Delta$]
\label{remark:Delta}
The IID objective\,($\min{F(\w(T))}$) in Eq.~\eqref{eq:Problem} is the same as minimizing $ F(\w(T)) - F(\w^*) $ in Eq.~\eqref{eq:ConvUpperBound} because $F(\w^*)$ is a constant. Thus, given $\uptau_1$, $\uptau_2$, and $\abs{\gK}$, because $\Delta$ is the most dominant factor in Eq.~\eqref{eq:ConvUpperBound}, it is important for the IID objective to reduce $\Delta$ by changing $\z$.
\end{remark}
\begin{remark}[Dominance of $d_{group}$]
\label{remark:Comm}
For the communication objective\,($ \min{ d_{group}\p{\uptau_2-1} + d_{global} } $), because $d_{global}$ is not affected by a certain node grouping from the definition of global aggregation in Eq.~\eqref{eq:w_GroupFL}, given $\uptau_1$, $\uptau_2$, and $\abs{\gK}$, it is important for the communication objective to reduce $d_{group}$ by changing $\z$.
\end{remark}

In conclusion, we establish the \textbf{IID and communication-aware grouping} principle: a group federated learning algorithm should group nodes by simultaneously minimizing $\Delta$ and $d_{group}$ to maximize both accuracy and efficiency.
%

\section{Optimization Algorithm: FedAvg-IC}
\label{sec:Optimization}
To solve Eq.~\eqref{eq:Problem}, an efficient heuristic algorithm is essential because the grouping problem itself is NP-Hard with the complexity of $ O(\abs{\gN}^{\abs{\gK}}) $. In this regard, we propose a novel algorithm called \textbf{FedAvg-IC}\,(\emph{\underline{Fed}erated \underline{Av}era\underline{g}ing with \underline{I}ID and \underline{C}ommunication-Aware Grouping}).

\subsection{Algorithm Description}
\label{sec:AlgorithmDesc}

FedAvg-IC aims at quickly finding an accurate global model based on the near-optimal node grouping that follows the IID and communication-aware grouping principle, for which we adopt the \emph{k-medoids} algorithm\,\citep{park2009simple}. 
The node grouping involves assigning each node to the closest medoid node and updating a representative medoid node for each group. Here, the distance is measured by the cost functions defined as follows.

{\bf \underline{Assign Cost}}: To evaluate the cost of assigning the $i$-th node to the $k$-th group, we model the IID cost\,($\textsc{Cost}_{A,iid}$) and communication cost\,($\textsc{Cost}_{A,comm}$) using $\Delta^k$ in Eq.~\eqref{eq:GradientDivergence} and the hop distance between the $i$-th node and $i_k$-th medoid node, respectively, as shown in Eq.~\eqref{eq:Cost_A}.
\vspace*{-0.1cm}
\begin{equation}
\label{eq:Cost_A}
\resizebox{.7\columnwidth}{!}{$
\begin{aligned}
    & \textsc{Cost}_{A,iid}\p{i, k} \triangleq \Delta^k ~~ \normal{where} ~~ i \in \gN^k \\
    & \textsc{Cost}_{A,comm}\p{i, k} \triangleq \textsc{HopDistance}\p{i, i_k}
\end{aligned}
$}
\vspace*{-0.1cm}
\end{equation}
{\bf \underline{Update Cost}}: To evaluate the cost of selecting the $i$-th node in the $k$-th group as a new medoid for the group, we model the IID cost\,($\textsc{Cost}_{U,iid}$) and the communication cost\,($\textsc{Cost}_{U,comm}$) by the local-to-global divergence of the $i$-th node and the sum of hop distances to all other nodes in the group, respectively, as shown in Eq.~\eqref{eq:Cost_U}.
\vspace*{-0.2cm}
\begin{equation}
\label{eq:Cost_U}
\resizebox{.75\columnwidth}{!}{$
\begin{aligned}
    & \textsc{Cost}_{U,iid}\p{i, k} \triangleq ~ \frac{ \abs{\gD^k_i} }{ \abs{\gD} } \norm{ \g F^k_i\p{\w} - \g F\p{\w} } \\
    & \textsc{Cost}_{U,comm}\p{i, k} \triangleq ~ \sum_{ j \in \gN^k }{ \textsc{HopDistance}\p{i, j} }
\end{aligned}
$}
\vspace*{-0.1cm}
\end{equation}
{\bf \underline{Combined Cost}}: 
Given $X \in \set{A, U}$, $\textsc{Cost}_{X,iid}$\,(IID cost) and $\textsc{Cost}_{X,comm}$\,(communication cost) are combined into a single cost, as shown in Eq.~\eqref{eq:Cost}.
\vspace*{-0.1cm}
\begin{equation}
\resizebox{.85\columnwidth}{!}{$
\label{eq:Cost}
    \textsc{Cost}_{X} \triangleq \alpha_{iid}\frac{\textsc{Cost}_{X,iid}}{C_{X,iid}} + \alpha_{comm}\frac{\textsc{Cost}_{X,comm}}{C_{X,comm}}
$}
\vspace*{-0.1cm}
\end{equation}
$\alpha$ is the weight, and $C$ is the normalizing constant\footnote{$C$ is set to be the first cost value in the optimization process\,\citep{grodzevich2006normalization}.}.

\begin{algorithm}[t!]
\caption{FedAvg-IC}
\label{alg:FedAvg-IC}
\begin{algorithmic}[1]
\Require{$ \gN, T, \uptau^{0}_1, \uptau^{0}_2, \abs{\gK} $}
\Ensure{$ \w\p{T} $}
\State{Initialize $ \w\p{0} $ and $\z$ randomly, $ \uptau_1 \assign 1, ~ \uptau_2 \assign 1 $}
\State{$ \bkt{ \w^k_i\p{0} }_{ i \in \gN } \assign \w\p{0} $} \Comment{Initial global broadcast\footnotemark}
\For{$ t \assign 1,2,\cdots,T $}
    \ParForeach{$ i \in \gN $} \Comment{Local}
        \State{$ \w^k_i\p{t} \assign \w^k_i\p{t-1} - \eta \g F^k_i\p{\w^k_i\p{t-1}} $}
    \EndParForeach
    \If{$ \p{ t - 1 } \normal{ mod } \uptau_1\uptau_2 \ne 0 $} \Comment{Group}
        \ParForeach{$ k \in \gK $}
            \State{$ \bkt{ \w^k }_{ i_k } \assign \sum_{ i \in \gN^k }{ \frac{ \abs{\gD^k_i} }{ \abs{\gD^k} } \bkt{ \w^k_i\p{t} }_{ i } } $}
            \State{$ \bkt{ \w^k_i\p{t} }_{ i \in \gN^k } \assign \bkt{ \w^k }_{ i_k } $}
        \EndParForeach
    \EndIf
    \If{$ \p{ t - 1 } \normal{ mod } \uptau_1\uptau_2 = 0 $} \Comment{Global}
        \State{$ \w \assign \sum_{ i \in \gN }{ \frac{ \abs{\gD^k_i} }{ \abs{\gD} } \bkt{ \w^k_i\p{t} }_{ i } } $}
        \State{$ \bkt{ \w^k_i\p{t} }_{ i \in \gN } \assign \w $}
        \If{$ \gN $ is not grouped}
            \State{$ \z \assign \textsc{Node\_Grouping}\p{ \z } $}
            \State{$ \p{ \uptau_1, \uptau_2 } \assign \p{ \uptau^{0}_1, \uptau^{0}_2 } $}
        \EndIf
    \EndIf
\EndFor
\Function{\textsc{Node\_Grouping}}{$\z$}
    \State{Select random medoid nodes $\gN_{m}$}
    \State{$ \z \assign \p{ \argmin_{ k \in \gK }{ \textsc{Cost}_{A}\p{i, k} } | \forall i \in \gN } $}
    \Until{the last $\textsc{Cost}_{A}$ is steady}
        \State{$ \gN_{m} \assign \p{ \argmin_{ i \in \gN^k }{ \textsc{Cost}_{U}\p{i, k} } | \forall k \in \gK } $}
        \State{$ \z \assign \p{ \argmin_{ k \in \gK }{ \textsc{Cost}_{A}\p{i, k} } | \forall i \in \gN } $}
    \EndUntil
    \Return{$\z$}
\EndFunction
\end{algorithmic}
\end{algorithm}
\addtocounter{footnote}{-1}
\stepcounter{footnote}\footnotetext{$ \bkt{X}_{i} $ denotes the variable reference of $ X $ at a node $ i $.}

Algorithm \ref{alg:FedAvg-IC} shows the overall procedure of FedAvg-IC. It takes the set of node indices $ \gN $, the final time $ T $, the learning steps $\uptau^0_1$ and $\uptau^0_2$, and the number of node groups $ \abs{\gK} $ as the input and returns the final global model $ \w\p{T} $ as the output. It begins by initializing the global model and group membership randomly\,(Line 1). Then, the global model is broadcast to all nodes\,(Line 2). 
Then, the local update is performed at each node\,(Lines 4--5); each group model is learned by aggregating all local models in the group and then broadcast back to all nodes\,(Lines 6--9); the global model is learned by aggregating all local models and then broadcast back to all nodes\,(Lines 10--12). After the first global aggregation, the group membership $\z$ is updated\,(Line 14). 
Overall, Lines 3--15 repeat for $ T $ steps.

The \textsc{Node\_Grouping} function attempts to find a group membership $\z$ that reduces the combined cost in Eq.~\eqref{eq:Cost} to the extent possible. For this purpose, it begins by selecting random medoid nodes $\gN_{m}$ of size $\abs{\gK}$. Then, it iteratively updates $\z$ by minimizing $\textsc{Cost}_{A}$ in Eq.~\eqref{eq:Cost_A} for all nodes and $\textsc{Cost}_{U}$ in Eq.~\eqref{eq:Cost_U} for all groups until the cost is steady\,(Lines 19--21).
\section{Evaluation}
\label{sec:Evaluation}

\subsection{Experimental Setting}
\label{sec:ExperimentalSetting}

{\bf \underline{Configuration}}:
We developed a federated learning simulator to extensively evaluate the performance of various algorithms, models, datasets, and networks based on TensorFlow 1.14.0. Please refer to Appendix~\ref{sec:ExperimentalSettingDetails} for details.

{\bf \underline{Algorithms}}:
We compared the following \emph{three} algorithms.
\begin{itemize}[noitemsep,leftmargin=10pt,nosep]
\item
    \textbf{FedAvg}\,\citep{FedAvg}, which is used as a baseline, does not consider node grouping at all.
\item
    \textbf{HierFAVG}\,\citep{HierFAVG} groups nodes by network edges to facilitate communication between nodes.
\item
    \underline{\textbf{FedAvg-IC}} groups nodes by minimizing both IID and communication costs. We also considered FedAvg-IC that only minimizes either IID or communication cost as FedAvg-I or FedAvg-C, respectively.
\end{itemize}

{\bf \underline{Datasets}}:
We used \emph{four} datasets, \circled{1} MNIST-O\,\citep{MNIST-O}, \circled{2} MNIST-F\,\citep{MNIST-F}, \circled{3} FEMNIST\,\citep{caldas2018leaf}, and \circled{4} CelebA\,\citep{celeba}, which consist of 70,000, 70,000, 78,353, and 10,014 examples, respectively. The ratio of train/validation/test examples was 3:1:1, as suggested by \citet{caldas2018leaf}.

\begin{figure*}[t!]
\centering
    \includegraphics[height=1.0cm]{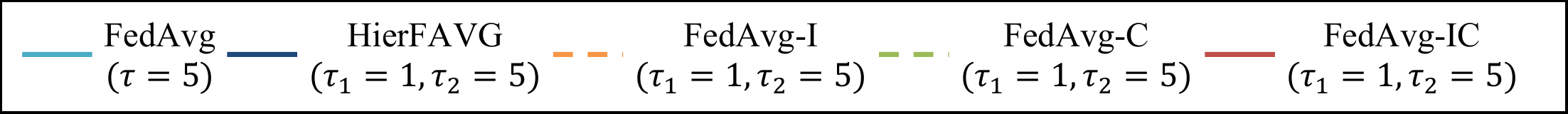} \\
    \begin{minipage}{.795\linewidth}
        \centering
        \includegraphics[height=3.1cm]{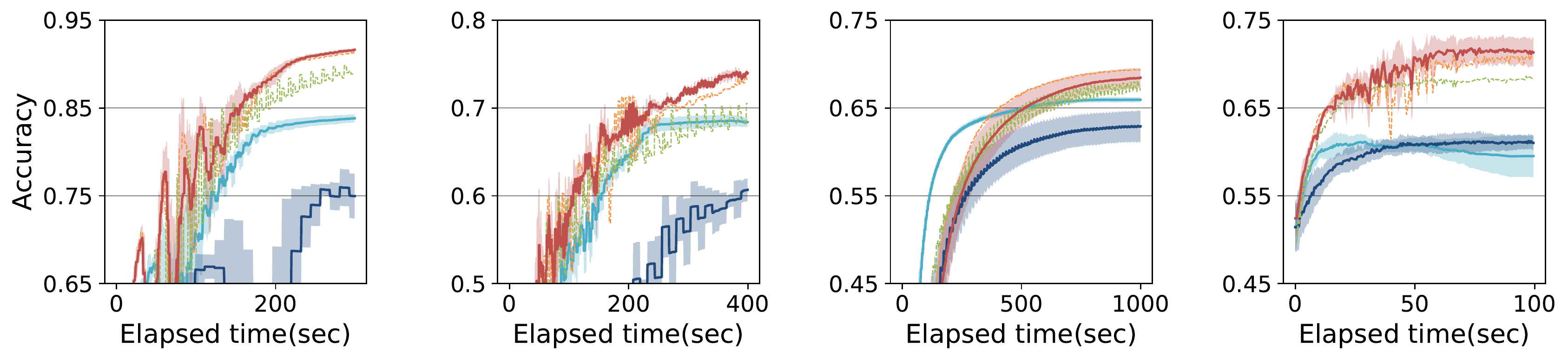}
        \vspace{-10pt} \\ 
        \begin{minipage}[t]{.28\linewidth}
        \centering
        \subcaption{MNIST-O.}\label{fig:accuracy-cnn-mnist_mnist-o}
        \end{minipage}
        \begin{minipage}[t]{.22\linewidth}
        \centering
        \subcaption{MNIST-F.}\label{fig:accuracy-cnn-mnist_mnist-f}
        \end{minipage}
        \begin{minipage}[t]{.26\linewidth}
        \raggedright
        \subcaption{FEMNIST.}\label{fig:accuracy-cnn-mnist_femnist}
        \end{minipage}
        \begin{minipage}[t]{.22\linewidth}
        \centering
        \subcaption{CelebA.}\label{fig:accuracy-cnn-celeba_celeba}
        \end{minipage}
        \vspace{-10pt}
        \caption{Test accuracy of the CNN on four datasets with \emph{Dtt} according to elapsed time.\newline}
        \label{fig:AccuracyTime}
    \end{minipage}
    \hspace{-10pt}
    \begin{minipage}{.2\linewidth}
        \begin{minipage}[t]{\linewidth}
        \centering
        \includegraphics[height=3.1cm]{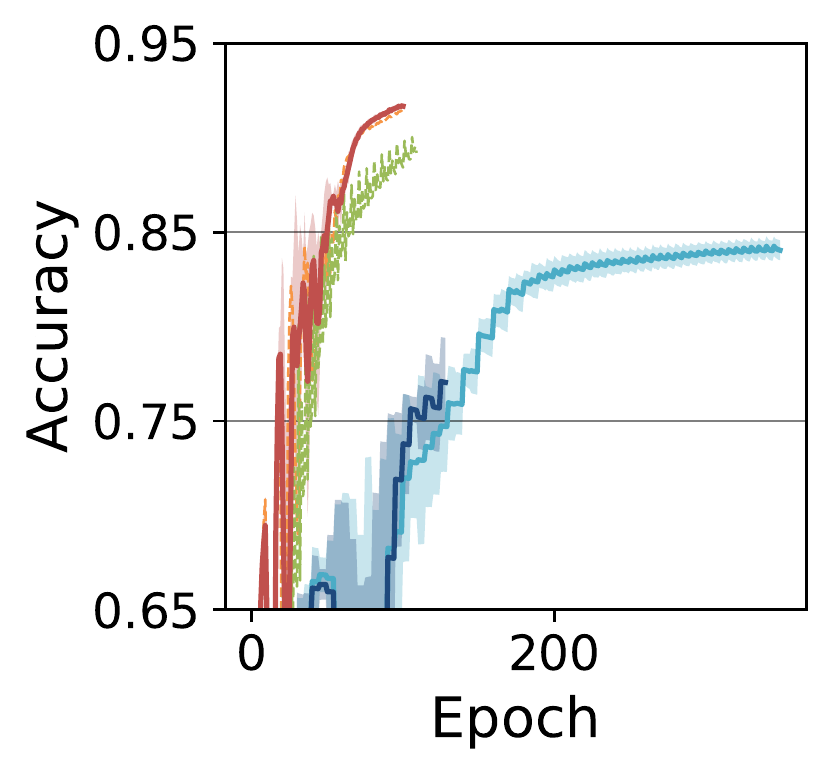}
        \vspace{-6pt} \\ 
        \subcaption{MNIST-O.}\label{fig:AccuracyEpoch}
        \end{minipage}
        \caption{Test accuracy according to epochs.}
    \end{minipage}
\vspace*{-0.5cm}
\end{figure*}

\begin{table}[h!]
\caption{Class diversity across nodes and edges. An entry is the number of classes per node or edge.}
\centering
\small
\begin{tabular}{rrrrrrr} \toprule
    & ~\textbf{\textit{Dtt}} & ~\textbf{\textit{Dtq}} & ~\textbf{\textit{Dth}} & ~\textbf{\textit{Dqq}} & ~\textbf{\textit{Dqh}} & ~\textbf{\textit{Dhh}} \\ \midrule
    Node & \emph{tenth} & \emph{tenth} & \emph{tenth} & \emph{quarter} & \emph{quarter} & \emph{half} \\
    Edge & \emph{tenth} & \emph{quarter} & \emph{half} & \emph{quarter} & \emph{half}  & \emph{half} \\ \bottomrule
\end{tabular}
\label{table:ClassDiversity}
\vspace*{-0.4cm}
\end{table}

{\bf \underline{Data Distribution}}:
To simulate a wide range of non-IIDness, we designed \emph{six} cases of class diversity on each node and edge, as shown in \autoref{table:ClassDiversity}. For example, in the \emph{Dtq} setting, only a tenth of the classes can exist per node, and a quarter of the classes can exist per edge.

{\bf \underline{Models}}:
We used \emph{three} training models, \circled{1} the softmax regression\,(SR), \circled{2} the 2 layered perceptron neural network\,(2NN), and \circled{3} the convolutional neural network\,(CNN). Please refer to Appendix~\ref{sec:ExperimentalSettingDetails} for details.

{\bf \underline{Methodology}}:
Deterministic gradient descent\,(DGD) was used for the SR to solve convex problems, and stochastic gradient descent\,(SGD) was used for the 2NN and the CNN to solve non-convex problems. We evaluate each algorithm five times and report the average with standard deviation.

\subsection{Accuracy Results}
\label{sec:AccuracyResults}

\autoref{fig:AccuracyTime} and \autoref{fig:AccuracyEpoch} show the test accuracy of three federated learning algorithms on a non-IID\,(\emph{Dtt}) dataset according to the elapsed time and epoch, respectively. Overall, FedAvg-IC outperformed FedAvg by up to 17.4\% (\autoref{fig:accuracy-cnn-celeba_celeba}) and HierFAVG by up to 22.2\% (\autoref{fig:accuracy-cnn-mnist_mnist-f}). In \autoref{fig:AccuracyEpoch}, the algorithms that considered communication such as FedAvg-IC and FedAvg-C outperformed the others. The higher accuracy of FedAvg-IC is attributed to a decreased $\Delta$ in the IID cost in Eq.~\eqref{eq:Cost}. Please refer to Appendix~\ref{sec:EvaluationDetails} for details.

\subsection{Efficiency Results}
\label{sec:EfficiencyResults}

\begin{table}[h!]
\caption{Elapsed time\,(in seconds) of the algorithms on the non-IID and IID datasets at the final test accuracy of the baseline FedAvg within a given time, where the final accuracy is specified in parenthesis next to each model name.}
\centering
\small
\begin{tabular} {>{\centering}p{0.22\columnwidth}>{\centering\arraybackslash}p{0.21\columnwidth}>{\centering\arraybackslash}p{0.21\columnwidth}>{\centering\arraybackslash}p{0.21\columnwidth}} \toprule
    & \multicolumn{3}{c}{Non-IID\,(Dtt)} \\
    & SR(84\%) & 2NN(73\%) & CNN(83\%) \\ \midrule
    \makecell{FedAvg} & 50 & 300 & 300 \\
    HierFAVG & 29(1.7x) & $-$ & $-$ \\
    \makecell{\textbf{FedAvg-IC}} & 6(8.3x) & 47(6.4x) & 149(2.0x) \\ \toprule
    & \multicolumn{3}{c}{IID\,(Dhh)} \\
    & SR(86\%) & 2NN(90\%) & CNN(96\%) \\ \midrule
    \makecell{FedAvg} & 100 & 600 & 600 \\
    HierFAVG & 29(3.4x) & 468(1.3x) & $-$ \\
    \makecell{\textbf{FedAvg-IC}} & 18(5.6x) & 291(2.1x) & 543(1.1x) \\ \bottomrule
\end{tabular}
\label{table:Efficiency}
\vspace*{-0.4cm}
\end{table}

\autoref{table:Efficiency} shows the elapsed time and speedup on the most non-IID\,(\emph{Dtt}) and IID\,(\emph{Dhh}) datasets. In terms of the elapsed time, FedAvg-IC outperformed FedAvg and HierFAVG by up to 8.3 times and 4.8 times, respectively. Even though HierFAVG is in favor of communication efficiency, because the edge-based learning of HierFAVG degrades the accuracy in non-IID settings, it did not reach the target accuracy for the 2NN and the CNN.
The faster convergence speed of FedAvg-IC is attributed to a decreased $d_{group}$ in Eq.~\eqref{eq:Cost} as well as a decreased communication data size by the combined aggregation. Please refer to Appendix~\ref{sec:EvaluationDetails}.
\section{Conclusion}
In this paper, we proposed a novel framework of \emph{IID and communication-aware group federated learning} to address both the non-IID and limited communication challenges simultaneously. Our formal convergence analysis led to the \textrm{IID and communication-aware grouping} principle that is incorporated into our optimization algorithm \emph{FedAvg-IC}. Extensive experiments were performed using our own federated learning simulator, and the results demonstrated that FedAvg-IC outperformed HierFAVG by up to $22.2\%$ in terms of test accuracy and FedAvg by up to $8.3$ times in terms of convergence speed. Overall, we believe that our framework has made important steps towards accurate and fast federated learning.

\bibliography{08-reference}
\bibliographystyle{icml2020}

\appendix
\section{Convergence of the Group Federated Learning}
\subsection{Proof Sketch}
\label{sec:ProofSketch}
We sketch the proof for the convergence of the global loss $F(\w(T))$ in Eq.~\eqref{eq:loss_GroupFL} through the following three steps.
\begin{itemize}[noitemsep,leftmargin=10pt,nosep]
\item
    \textbf{Step 1}\,(Local Learning Divergence): For a \emph{group} interval $ \bkt{r} $, we find the loss divergence between a local model and a virtual group model, $ F(\w^k_i(t)) - F(\v^k_{\bkt{r}}(t)) $.
\item
    \textbf{Step 2}\,(Group Learning Divergence): For a \emph{global} interval $ \bkt{l} $, we find the loss divergence between a virtual group model and a virtual global model, $ F(\v^k_{\bkt{r}}(t)) - F(\v_{\bkt{l}}(t)) $. Then, by combining the aforementioned two loss divergences for all local models $ \w^k_i $, we obtain the loss divergence between a federated global model and a virtual global model, $ F(\w(t)) - F(\v_{\bkt{l}}(t)) $.
\item
    \textbf{Step 3}\,(Global Learning Divergence): For \emph{all} global intervals, by combining $ F(\w(t)) - F(\v_{\bkt{l}}(t)) $ from Step 2 with the loss divergence between a virtual global model and the optimal model, $ F(\v_{\bkt{l}}(t)) - F(\w^*) $, we finally obtain $ F(\w(T)) - F(\w^*) $.
\end{itemize}

Corresponding to \underline{Steps 1 and 2} of the proof sketch, Lemma~\ref{lemma:w_vq} gives an upper bound between a federated global model $ \w\p{t} $ and a virtual global model $ \v_{\bkt{l}}\p{t} $.
\begin{lemma}
\label{lemma:w_vq}
For any global interval $ \bkt{l} $ and $ t \in \bkt{l} $, if $ F^k_i $ is $ \beta $-smooth for every $i$ and $k$ in Eq.\ \eqref{eq:GradientDivergence}, then Eq.\ \eqref{eq:w_vq_bound} holds.
\begin{equation}
\label{eq:w_vq_bound}
\begin{split}
    &\norm{ \w\p{t} - \v_{\bkt{l}}\p{t} } \\
    &\le \frac{ \delta }{ \beta } \p{ \p{ \eta\beta + 1 }^{ \uptau_1 } - 1 } + \frac{ \Delta }{ \beta } \p{ \p{ \eta\beta + 1 }^{ \uptau_1\uptau_2 } - 1 }
\end{split}
\end{equation}
\end{lemma}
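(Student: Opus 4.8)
The plan is to bound $\norm{\w(t) - \v_{\bkt{l}}(t)}$ by inserting the intermediate virtual group models $\v^k_{\bkt{r}}$ and applying the triangle inequality, then controlling each piece with a recursive ``gradient-divergence accumulation'' argument. Concretely, for $t$ in the global interval $\bkt{l}$ let $r$ be the group interval containing $t$. First I would write, for each node group $k$ and each local node $i \in \gN^k$, the bound $\norm{\w^k_i(t) - \v^k_{\bkt{r}}(t)}$ at the \emph{local-to-group} level, and separately $\norm{\v^k_{\bkt{r}}(t) - \v_{\bkt{l}}(t)}$ at the \emph{group-to-global} level; then I would combine them, weighting by $\abs{\gD^k_i}/\abs{\gD}$ and using $\w(t) = \sum_{k}\sum_{i\in\gN^k} \frac{\abs{\gD^k_i}}{\abs{\gD}} \w^k_i(t)$ together with convexity of the norm, to get
\[
\norm{\w(t) - \v_{\bkt{l}}(t)} \le \sum_{k\in\gK}\sum_{i\in\gN^k} \frac{\abs{\gD^k_i}}{\abs{\gD}}\norm{\w^k_i(t) - \v^k_{\bkt{r}}(t)} + \sum_{k\in\gK}\frac{\abs{\gD^k}}{\abs{\gD}}\norm{\v^k_{\bkt{r}}(t) - \v_{\bkt{l}}(t)}.
\]

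The core of the argument is the single-level recursion. Take the local-to-group gap: define $a_i(t) = \norm{\w^k_i(t) - \v^k_{\bkt{r}}(t)}$. Since both iterates follow gradient descent with step $\eta$ — $\w^k_i$ on $F^k_i$ and $\v^k_{\bkt{r}}$ on $F^k$ — I would write $\w^k_i(t) - \v^k_{\bkt{r}}(t) = \w^k_i(t-1) - \v^k_{\bkt{r}}(t-1) - \eta\bigl(\g F^k_i(\w^k_i(t-1)) - \g F^k(\v^k_{\bkt{r}}(t-1))\bigr)$, split the gradient difference as $\bigl(\g F^k_i(\w^k_i(t-1)) - \g F^k(\w^k_i(t-1))\bigr) + \bigl(\g F^k(\w^k_i(t-1)) - \g F^k(\v^k_{\bkt{r}}(t-1))\bigr)$, bound the first term by $\delta^k_i$ (Definition~\ref{def:GradientDivergence}) and the second by $\beta\, a_i(t-1)$ ($\beta$-smoothness of $F^k$, Lemma~\ref{lemma:Assumption}). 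This yields $a_i(t) \le (1+\eta\beta)\, a_i(t-1) + \eta\,\delta^k_i$ with $a_i$ vanishing at the start of the group interval $\bkt{r}$. Unrolling the recursion over at most $\uptau_1$ steps and summing the geometric series gives $a_i(t) \le \frac{\delta^k_i}{\beta}\bigl((\eta\beta+1)^{\uptau_1} - 1\bigr)$. The identical computation for $b_k(t) = \norm{\v^k_{\bkt{r}}(t) - \v_{\bkt{l}}(t)}$, now over at most $\uptau_1\uptau_2$ steps with divergence $\Delta^k$, gives $b_k(t) \le \frac{\Delta^k}{\beta}\bigl((\eta\beta+1)^{\uptau_1\uptau_2} - 1\bigr)$. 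Substituting these into the displayed triangle-inequality bound and recalling $\delta = \sum_k\sum_i \frac{\abs{\gD^k_i}}{\abs{\gD}}\delta^k_i$ and $\Delta = \sum_k \frac{\abs{\gD^k}}{\abs{\gD}}\Delta^k$ from Eq.~\eqref{eq:GradientDivergenceSum} produces exactly Eq.~\eqref{eq:w_vq_bound}.

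I expect the main obstacle to be the middle step — bounding $\norm{\v^k_{\bkt{r}}(t) - \v_{\bkt{l}}(t)}$ and, more subtly, reconciling the re-synchronizations. The virtual group model $\v^k_{\bkt{r}}$ resets to $\w^k(t)$ at each group boundary, whereas $\v_{\bkt{l}}$ resets to $\w(t)$ only at global boundaries, so across a global interval $\v^k_{\bkt{r}}$ is re-anchored $\uptau_2$ times to a moving target; one must be careful that the ``reset gap'' $\norm{\w^k((r-1)\uptau_1) - \v_{\bkt{l}}((r-1)\uptau_1)}$ is itself already controlled by the quantity being bounded, so the argument should be organized either as an induction over group intervals within $\bkt{l}$ or by bounding the telescoped sum of per-group-interval increments. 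Another delicate point is that $\delta^k_i$ and $\Delta^k$ are defined pointwise in $\w$, so I must either invoke a uniform bound over the relevant iterates or state the recursion at the specific points $\w^k_i(t-1)$, $\v^k_{\bkt{r}}(t-1)$ where it is applied; I would handle this by treating $\delta, \Delta$ as uniform upper bounds, consistent with how they enter the final theorem. Everything else is routine geometric-series bookkeeping.
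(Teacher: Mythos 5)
Your proposal matches the paper's proof essentially step for step: the same triangle-inequality decomposition through the intermediate $\v^k_{\bkt{r}}$, the same $(1+\eta\beta)$-recursion with forcing term $\eta\delta^k_i$ (resp. $\eta\Delta^k$) unrolled into a geometric series over at most $\uptau_1$ (resp. $\uptau_1\uptau_2$) steps, and the same Jensen-weighted combination over nodes using Eq.~\eqref{eq:GradientDivergenceSum}. The subtlety you flag about the group-to-global term --- that $\v^k_{\bkt{r}}$ re-anchors to $\w^k$ at every group boundary while $\v_{\bkt{l}}$ resets only at global boundaries, so that recursion does not restart from zero --- is genuine; the paper dispatches it with a one-line ``analogously,'' so your proposed induction over group intervals is, if anything, more careful than the published argument.
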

\begin{proof}
Please refer to Appendix~\ref{sec:Proof_Lemma} for details.
\end{proof}

Finally, corresponding to \underline{Step 3} of the proof sketch, Theorem \ref{thm:ConvUpperBound} is derived from Lemma~\ref{lemma:w_vq}.

\subsection{Proof of Lemma~\ref{lemma:w_vq}}
\label{sec:Proof_Lemma}
To prove Lemma~\ref{lemma:w_vq}, we introduce an auxiliary lemma (Lemma~\ref{lemma:wi-vq}).

\begin{lemma}
\label{lemma:wi-vq}
For any $ \bkt{r} $, $ \bkt{l} $, and $ t \in \bkt{ \p{r-1}\uptau_1, r\uptau_1 } \subset \bkt{ \p{l-1}\uptau_1\uptau_2, l\uptau_1\uptau_2 } $, an upper bound of the norm of the difference between a local model and the virtual global model can be expressed as Eq.~\eqref{eq:node-virtual-global}.
\begin{equation}
\label{eq:node-virtual-global}
\resizebox{\columnwidth}{!}{$
\begin{multlined}
    \norm{ \w^k_i\p{t} - \v_{\bkt{l}}\p{t} } \\
    \le \frac{ \delta^k_i }{ \beta } \p{ \p{ \eta\beta + 1 }^{ t - \p{r-1}\uptau_1 } - 1 } + \frac{ \Delta^k }{ \beta } \p{ \p{ \eta\beta + 1 }^{ t - \p{l-1}\uptau_1\uptau_2 } - 1 }
\end{multlined}
$}
\end{equation}
\end{lemma}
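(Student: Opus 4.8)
The plan is to insert the virtual group model as an intermediate and split, for $t\in\bkt{r}\subseteq\bkt{l}$,
\[
\norm{\w^k_i\p{t} - \v_{\bkt{l}}\p{t}} \le \norm{\w^k_i\p{t} - \v^k_{\bkt{r}}\p{t}} + \norm{\v^k_{\bkt{r}}\p{t} - \v_{\bkt{l}}\p{t}},
\]
bounding the first summand by a recursion unrolled over the group interval $\bkt{r}$ (which yields the $\delta^k_i$ term) and the second by a recursion unrolled over the global interval $\bkt{l}$ (which yields the $\Delta^k$ term). Each is of the form $e\p{t} \le \p{1+\eta\beta}\,e\p{t-1} + \eta\,g$ with $e\p{t_0}=0$, whose solution $e\p{t} \le \frac{g}{\beta}\bigl(\p{1+\eta\beta}^{\,t-t_0}-1\bigr)$ follows by summing a geometric series; the inductive step is closed by the identity $\eta\beta-\p{1+\eta\beta}=-1$.

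For the first summand I would induct on $t$ over $\bkt{r}$. At the left endpoint $t=\p{r-1}\uptau_1$ the local model has just been synchronized to $\w^k$ by the group aggregation in Eq.~\eqref{eq:w_GroupFL}, and $\v^k_{\bkt{r}}$ is synchronized to $\w^k$ by Definition~\ref{def:v_central}, so $\norm{\w^k_i\p{t}-\v^k_{\bkt{r}}\p{t}}=0$, matching the claimed bound at exponent $0$. For a non-aggregation $t$ I would subtract the two gradient-descent updates, apply the triangle inequality, and decompose
\[
\g F^k_i\p{\w^k_i\p{t-1}} - \g F^k\p{\v^k_{\bkt{r}}\p{t-1}} = \bigl[\g F^k_i\p{\w^k_i\p{t-1}} - \g F^k\p{\w^k_i\p{t-1}}\bigr] + \bigl[\g F^k\p{\w^k_i\p{t-1}} - \g F^k\p{\v^k_{\bkt{r}}\p{t-1}}\bigr],
\]
whose first bracket has norm $\le\delta^k_i$ by Definition~\ref{def:GradientDivergence} and whose second has norm $\le\beta\norm{\w^k_i\p{t-1}-\v^k_{\bkt{r}}\p{t-1}}$ by the $\beta$-smoothness of $F^k$ from Lemma~\ref{lemma:Assumption}. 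This is the recursion above with $g=\delta^k_i$, $t_0=\p{r-1}\uptau_1$, giving $\norm{\w^k_i\p{t}-\v^k_{\bkt{r}}\p{t}}\le\frac{\delta^k_i}{\beta}\bigl(\p{\eta\beta+1}^{\,t-\p{r-1}\uptau_1}-1\bigr)$.

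The second summand is handled the same way over the global interval $\bkt{l}$. At $t=\p{l-1}\uptau_1\uptau_2$ a global aggregation makes every local model, and hence the group model $\w^k$ and the virtual models $\v^k_{\bkt{r}}$ and $\v_{\bkt{l}}$, coincide with the global model $\w$, so the distance starts at $0$. For a gradient step I would split $\g F^k\p{\v^k_{\bkt{r}}\p{t-1}}-\g F\p{\v_{\bkt{l}}\p{t-1}}$ into a piece of norm $\le\Delta^k$ (Definition~\ref{def:GradientDivergence}) plus a piece of norm $\le\beta\norm{\v^k_{\bkt{r}}\p{t-1}-\v_{\bkt{l}}\p{t-1}}$ ($\beta$-smoothness of $F$, Lemma~\ref{lemma:Assumption}), yielding the recursion with $g=\Delta^k$, $t_0=\p{l-1}\uptau_1\uptau_2$, hence $\norm{\v^k_{\bkt{r}}\p{t}-\v_{\bkt{l}}\p{t}}\le\frac{\Delta^k}{\beta}\bigl(\p{\eta\beta+1}^{\,t-\p{l-1}\uptau_1\uptau_2}-1\bigr)$. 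Adding the two bounds gives Eq.~\eqref{eq:node-virtual-global}.

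The step I expect to be the main obstacle is verifying that the second recursion survives the fact that $\v^k_{\bkt{r}}$ is re-synchronized to $\w^k$ at the start of every group interval, so inside a single global interval the recursion is interrupted every $\uptau_1$ steps: at such a time $\p{r-1}\uptau_1$ one has $\norm{\v^k_{\bkt{r}}\p{t}-\v_{\bkt{l}}\p{t}}=\norm{\w^k\p{t}-\v_{\bkt{l}}\p{t}}$ rather than the value predicted by the preceding gradient step. To close this I would write $\w^k\p{t}$ as the data-size-weighted average $\sum_{j\in\gN^k}\frac{\abs{\gD^k_j}}{\abs{\gD^k}}\bigl(\w^k_j\p{t-1}-\eta\g F^k_j\p{\w^k_j\p{t-1}}\bigr)$, pull the norm inside this convex combination, and control each term by re-applying the first-summand bound to each $\w^k_j$ at the end of interval $\bkt{r-1}$; keeping the two exponents consistent across this reset is the one genuinely fiddly ingredient, whereas the interior gradient-descent steps and the final data-weighted summation over $i$ and $k$ (which collapses $\sum\delta^k_i$ to $\delta$ and $\sum\Delta^k$ to $\Delta$, producing Lemma~\ref{lemma:w_vq}) are routine.
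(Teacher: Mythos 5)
Your decomposition is exactly the paper's: insert $\v^k_{\bkt{r}}$ via the triangle inequality as in Eq.~\eqref{eq:wvdistc}, then unroll a $\p{1+\eta\beta}$-recursion for each summand. For the first summand your argument matches Eq.~\eqref{eq:node-virtual-group}--\eqref{eq:wi-delta} up to an immaterial choice of which intermediate gradient to add and subtract (you insert $\g F^k\p{\w^k_i\p{t-1}}$, the paper inserts $\g F^k_i\p{\v^k_{\bkt{r}}\p{t-1}}$; both close the recursion with $\delta^k_i$ and $\beta$-smoothness).

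For the second summand the paper writes only ``Analogously, one can derive Eq.~\eqref{eq:wvdist2}'' and the obstacle you single out is real and is \emph{not} addressed there: by Definition~\ref{def:v_central}, $\v^k_{\bkt{r}}$ is re-synchronized to the federated group model $\w^k$ at every interior time $\p{r-1}\uptau_1$ of the global interval, and $\w^k\p{\p{r-1}\uptau_1}$ is the data-weighted aggregate of the $\w^k_j$'s, not the value the virtual trajectory $\v^k_{\bkt{r-1}}$ would have reached. Your proposed patch (pull the norm inside the convex combination defining $\w^k$ in Eq.~\eqref{eq:w_GroupFL} and reuse the first-summand bound on each $\w^k_j$) is the natural repair, but note that it injects an extra additive term of order $\sum_j \frac{\abs{\gD^k_j}}{\abs{\gD^k}}\frac{\delta^k_j}{\beta}\p{\p{\eta\beta+1}^{\uptau_1}-1}$ at each of the $\uptau_2-1$ resets, and each such injection is then amplified by the remaining recursion; the result is therefore \emph{not} Eq.~\eqref{eq:wvdist2} as stated, which contains only $\Delta^k$. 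So the step you flag as fiddly is in fact a gap in the paper's own proof rather than a defect peculiar to your write-up: either the interior resets must be argued to be dominated by the pure $F^k$-gradient-descent trajectory (which does not follow from the stated assumptions), or the bound in Eq.~\eqref{eq:wvdist2}, and hence the $\Delta$-coefficient in Lemma~\ref{lemma:w_vq} and Theorem~\ref{thm:ConvUpperBound}, should carry an additional $\delta$-dependent correction.
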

\begin{proof}
From the triangle inequality, one can simply derive Eq.~\eqref{eq:wvdistc}.
\begin{equation}
\resizebox{0.8\columnwidth}{!}{$
\begin{multlined}
    \norm{ \w^k_i\p{t} - \v_{\bkt{l}}\p{t} } \\
    = \norm { \w^k_i\p{t} - \v^k_{\bkt{r}}\p{t} + \v^k_{\bkt{r}}\p{t} - \v_{\bkt{l}}\p{t} } \hspace{0.1cm}\\
    \le \norm{ \w^k_i\p{t} - \v^k_{\bkt{r}}\p{t} } + \norm{ \v^k_{\bkt{r}}\p{t} - \v_{\bkt{l}}\p{t} } \label{eq:wvdistc}
\end{multlined}
$}
\end{equation}
To conclude this proof, it thus suffices to show Eq.~\eqref{eq:wvdist1} and \eqref{eq:wvdist2}.
\begin{align}
    \norm{ \w^k_i\p{t} - \v^k_{\bkt{r}}\p{t} } \le & \frac{ \delta^k_i }{ \beta } \p{ \p{ \eta\beta + 1 }^{ t - \p{r-1}\uptau_1 } - 1 }\quad \label{eq:wvdist1} \\
    \norm{ \v^k_{\bkt{r}}\p{t} - \v_{\bkt{l}}\p{t} } \le & \frac{ \Delta^k }{ \beta } \p{ \p{ \eta\beta + 1 }^{ t - \p{l-1}\uptau_1\uptau_2 } - 1 } \label{eq:wvdist2}
\end{align}
Then, by putting Eq.~\eqref{eq:wvdist1} and \eqref{eq:wvdist2} into Eq.~\eqref{eq:wvdistc}, we can confirm Lemma~\ref{lemma:wi-vq}.

Both Eq.~\eqref{eq:wvdist1} and \eqref{eq:wvdist2} can be easily drawn from the $\beta$-smooth property of $F_i^k$ and $F^k$. From Eq.~(4) and (6), we can derive Eq.~\eqref{eq:node-virtual-group}. 
\begin{equation}
\label{eq:node-virtual-group}
\resizebox{\columnwidth}{!}{$
\begin{multlined}
    \norm{ \w^k_i\p{t} - \v^k_{\bkt{r}}\p{t} } \\
    \begin{aligned}[t]=\norm*{ \w^k_i\p{t-1} - \eta\g F^k_i\p{\w^k_i\p{t-1}} \hspace{2.2cm}\\
    - \v^k_{\bkt{r}}\p{t-1} + \eta\g F^k\p{\v^k_{\bkt{r}}\p{t-1}}} \end{aligned} \\
    \le \norm{ \w^k_i\p{t-1} - \v^k_{\bkt{r}}\p{t-1} } \hspace{3.2cm} \\
    + \eta\norm{ \g F^k_i\p{\w^k_i\p{t-1}} - \g F^k_i\p{\v^k_{\bkt{r}}\p{t-1}} } \hspace{0.2cm} \\
    + \eta\norm{ \g F^k_i\p{\v^k_{\bkt{r}}\p{t-1}} - \g F^k\p{\v^k_{\bkt{r}}\p{t-1}} } \\
    \le \p{ \eta\beta + 1 }\norm{ \w^k_i\p{t-1} - \v^k_{\bkt{r}}\p{t-1} } + \eta\delta^k_i \hspace{1.3cm}
\end{multlined}
$}
\end{equation}

The last inequality stems from the $ \beta $-smoothness of $ F^k_i $ and Definition 2. 

Then, since $\w^k_i\p{t} = \w^k\p{t} = \v^k_{\bkt{r}}\p{t} $ at every group aggregation from Eq.~(4) and (6), Eq.~\eqref{eq:node-virtual-group} can be rewritten as Eq.~\eqref{eq:wi-delta}. 
\begin{align}
    \norm{ \w^k_i\p{t} - \v^k_{\bkt{r}}\p{t} } \le & \eta\delta^k_i\sum^{ t - \p{r-1}\uptau_1 }_{ y = 1 }{ \p{ \eta\beta + 1 }^{ y - 1 } } \nonumber \\
    =& \frac{ \delta^k_i }{ \beta } \p{ \p{ \eta\beta + 1 }^{ t - \p{r-1}\uptau_1 } - 1 } \label{eq:wi-delta}
\end{align}
Analogously, one can derive Eq.~\eqref{eq:wvdist2}. This is the end of the proof of Lemma~\ref{lemma:wi-vq}.
\end{proof}
For all $t$ and $q$, from Lemma~\ref{lemma:wi-vq} and Jensen's inequality, Lemma~2 can be proven as in Eq.~\eqref{eq:w-v}.
\begin{equation}
\label{eq:w-v}
\resizebox{0.9\columnwidth}{!}{$
\begin{multlined}
    \norm{ \w\p{t} - \v_{\bkt{l}}\p{t} } \\
    \le \sum_{ k \in \gK } \sum_{ i \in \gN^k } \frac{ \abs{\gD^k_i} }{ \abs{\gD} } \norm{ \w^k_i\p{t} - \v_{\bkt{l}}\p{t} } \hspace{1.3cm} \\
    \le \frac{ \delta }{ \beta } \p{ \p{ \eta\beta + 1 }^{ \uptau_1 } - 1 } + \frac{ \Delta }{ \beta } \p{ \p{ \eta\beta + 1 }^{ \uptau_1\uptau_2 } - 1 }
\end{multlined}
$}
\end{equation}
Furthermore, since $F$ is $\rho$-Lipschitz, Eq.\ \eqref{eq:fw_fvq_bound} holds.
\begin{equation}
\label{eq:fw_fvq_bound}
\begin{split}
    &F\p{\w\p{t}} - F\p{\v_{\bkt{l}}\p{t}} \\
    &\le \rho\p{ \frac{ \delta }{ \beta } \p{ \p{ \eta\beta + 1 }^{ \uptau_1 } - 1 } + \frac{ \Delta }{ \beta } \p{ \p{ \eta\beta + 1 }^{ \uptau_1\uptau_2 } - 1 } }
\end{split}
\end{equation}

\subsection{Proof of Theorem~\ref{thm:ConvUpperBound}}
\label{sec:Proof_ConvUpperBound}
Consider a certain learning step $t$ in a $l$-th global interval, i.e., $t \in  [ \p{l-1}\uptau_1\uptau_2, l\uptau_1\uptau_2 )$. Recall that $\v_{\bkt{l}}\p{t+1} = \v_{\bkt{l}}\p{t} - \eta \nabla F\p{\v_{\bkt{l}}\p{t}}$ from Eq.~(6) in Section \ref{sec:Assumption}. Since $F$ is convex, an upper bound of the loss divergence between the virtual global model and the optimal global model can be expressed as Eq.~\eqref{eq:FFstar}.
\begin{equation}
\resizebox{0.9\columnwidth}{!}{$
\begin{multlined}
    F\p{\v_{\bkt{l}}\p{t}} - F\p{\w^*} \\
    \le \nabla F\p{\v_{\bkt{l}}\p{t}}^\top \p{\v_{\bkt{l}}\p{t} - \w^* } \hspace{3.5cm} \\
    = \frac{1}{\eta} \p{\v_{\bkt{l}}\p{t} - \v_{\bkt{l}}\p{t+1}}^\top \p{\v_{\bkt{l}}\p{t} - \w^* } \hspace{1.8cm}\\
    \begin{aligned}[t]= \frac{1}{2\eta} \p*{\norm{\v_{\bkt{l}}\p{t} - \v_{\bkt{l}}\p{t+1}}^2 \hspace{3.5cm}\\
    + \norm{ \v_{\bkt{l}}\p{t} - \w^*}^2 - \norm{\v_{\bkt{l}}\p{t+1} -\w^* }^2 } \end{aligned} \\
    = \frac{\eta}{2}\norm{\nabla F\p{\v_{\bkt{l}}\p{t}}}^2 \hspace{5cm} \\
    + \frac{1}{2\eta} \p{ \norm{ \v_{\bkt{l}}\p{t} - \w^*}^2 - \norm{\v_{\bkt{l}}\p{t+1} -\w^* }^2 } \label{eq:FFstar}
\end{multlined}
$}
\end{equation}  
Additionally, since $F$ is convex and $\beta$-smooth, when $\eta \le \frac{1}{\beta}$, one can derive Eq.~\eqref{eq:FFvv}.
\begin{equation}
\resizebox{0.8\columnwidth}{!}{$
\begin{multlined}
    F\p{\v_{\bkt{l}}\p{t}} - F\p{\v_{\bkt{l}}\p{t+1}} \\
    \ge \nabla F\p{\v_{\bkt{l}}\p{t}}^\top \p{\v_{\bkt{l}}\p{t} - \v_{\bkt{l}}\p{t+1}} \hspace{0.6cm} \\
    - \frac{\beta}{2}\norm{\v_{\bkt{l}}\p{t} - \v_{\bkt{l}}\p{t+1}}^2 \\
    = \eta \norm{\nabla F\p{\v_{\bkt{l}}\p{t}}}^2  - \frac{\beta \eta^2}{2}\norm{\nabla F\p{\v_{\bkt{l}}\p{t}}}^2 \\
    \ge \frac{\eta}{2}\norm{\nabla F\p{\v_{\bkt{l}}\p{t}}}^2 \hspace{3.7cm} \label{eq:FFvv}
\end{multlined}
$}
\end{equation}  
From Eq.~\eqref{eq:FFstar} and \eqref{eq:FFvv}, Eq.~\eqref{eq:FvFw} is derived by straightforward mathematics.
\begin{equation}
\resizebox{0.7\columnwidth}{!}{$
\begin{multlined}
    F\p{\v_{\bkt{l}}\p{n \uptau_1 \uptau_2 }} - F\p{\w^*} \\
    \begin{aligned}[t]\le \frac{1}{2\uptau_1 \uptau_2 \eta} \p*{ \norm{ \v_{\bkt{l}}\p{(l-1) \uptau_1 \uptau_2 } - \w^*}^2 \hspace{0.5cm} \\
    - \norm{\v_{\bkt{l}}\p{n \uptau_1 \uptau_2 } -\w^* }^2 } \end{aligned} \\
    \le \frac{1}{2\uptau_1 \uptau_2 \eta \omega} \hspace{4.4cm}\label{eq:FvFw} \\
\end{multlined}
$}
\end{equation}
From Eq.~\eqref{eq:FvFw} and Lemma~2, Theorem~1 can be proven as in Eq.~\eqref{eq:thm1pf}.
\begin{equation}
\label{eq:thm1pf}
\resizebox{\columnwidth}{!}{$
\begin{multlined}
    F\p{\w\p{T}} - F\p{\w^*} \le \frac{1}{2\uptau_1 \uptau_2 \eta \omega} \\
    + \rho \p{\frac{ \delta }{ \beta } \p{ \p{ \eta\beta + 1 }^{\uptau_1 } - 1 } + \frac{ \Delta }{ \beta } \p{ \p{ \eta\beta + 1 }^{ \uptau_1\uptau_2 } - 1 } }
\end{multlined}
$}
\end{equation}

\section{Advanced Implementation Technique}
\label{sec:Advanced}

\begin{figure}[t!]
    \centering
    \includegraphics[width=0.9\columnwidth]{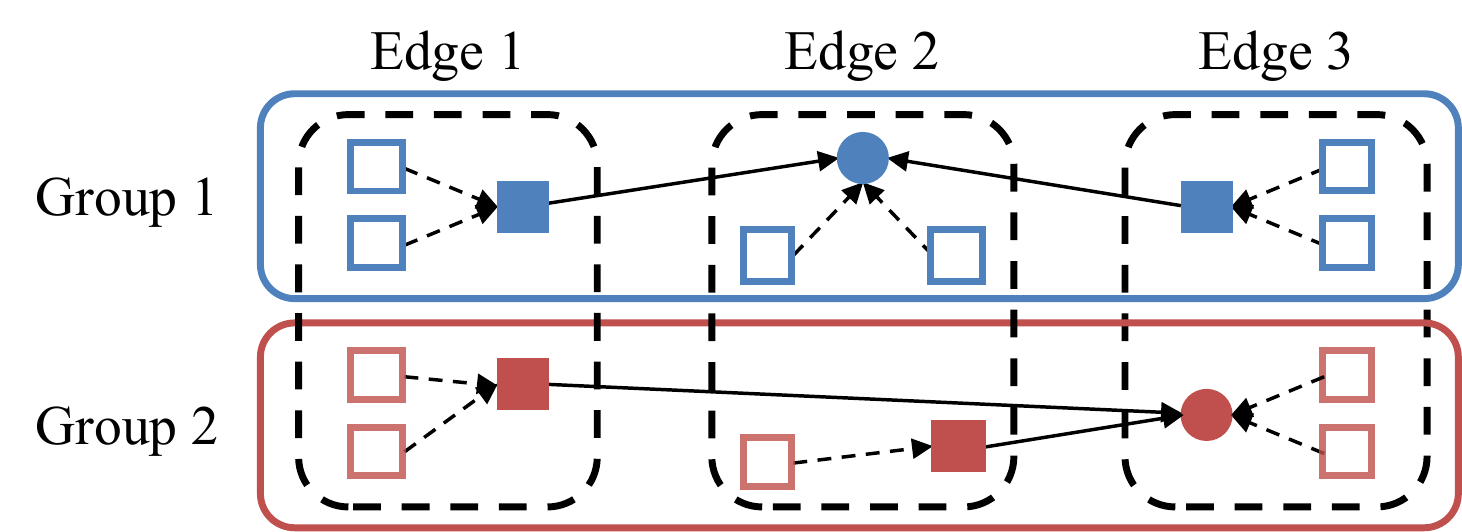}
    \caption{An example of combined aggregation.}
    \label{fig:CombinedAggr}
\end{figure} 

In addition, we propose a novel \emph{combined aggregation} technique that reduces the size of communication data in FedAvg-IC. \autoref{fig:CombinedAggr} represents an example of the combined aggregation. It is obvious that communication cost is almost negligible within an edge, and a group may consist of a few edges. Then, a certain local aggregation server\,(solid squares) can locally aggregate\,(dashed arrows) local models\,(hollow squares) in the same edge and send\,(solid arrows) the locally aggregated model to the group or global aggregation server\,(solid circles) with reduced communication data. This technique can be applied to both group and global aggregations as well as both group and global broadcasts inversely. We also note that this technique is similar to the partial aggregation of Dryad\,\citep{Dryad} and the combiner of MapReduce\,\citep{MapReduce}.

\section{Supplementary Evaluation Material}

\subsection{Experimental Setting Details}
\label{sec:ExperimentalSettingDetails}

{\bf \underline{Configuration}}:
We developed a federated learning simulator to extensively evaluate the performance of various algorithms, models, datasets, and networks. We used TensorFlow 1.14.0 to implement federated learning\footnote{TensorFlow Federated\,\citep{TFF} does not fully support the parallelism level of this simulation yet.} and ns-3 3.30 to simulate the network on servers with Intel Core i7-6700 and NVIDIA TITAN X. For reproducibility, we provide the source code at \url{https://bit.ly/39g10Ip}.

{\bf \underline{Models}}:
We used following \emph{three} training models.
\begin{itemize}[noitemsep,leftmargin=10pt,nosep]
\item
    The softmax regression\,(SR) involved 7,850 parameters.
\item
    The 2 layered perceptron neural network\,(2NN) contained two hidden layers each with 200 units and ReLU activiation; it contained 199,210 parameters.
\item
    The convolutional neural network\,(CNN) contained two 5$\times$5 convolutional layers with 64 channels, each followed by 2$\times$2 max pooling and local response normalization. After the two convolutional layers, a fully-connected layer with 256 units and ReLU activation was added; the output layer with softmax activation was added. The CNN contained 369,098 parameters. For CelebA, the benchmark CNN, provided by \citet{caldas2018leaf}, was used, and it contained 124,808 parameters.
\end{itemize}

{\bf \underline{Metrics}}:
We evaluated the performance of the algorithms using the following metrics.
The \textit{test accuracy} was measured to evaluate the training progress and the predictive accuracy, respectively. In addition, the \textit{epoch} and the \textit{time} taken to reach a target test accuracy were measured to evaluate the convergence speed.

{\bf \underline{Hyperparameters}}:
\autoref{table:Hyperparameters} lists the hyperparameters used for the model and the algorithm.

\begin{table}[t!]
\caption{Summary of parameters\,(the default value in bold).}
\centering
\small
\begin{tabular}{ccc} \toprule
    \textbf{Category} & \textbf{Paramter} & \textbf{Value} \\ \midrule
    \multirow{3}{*}{Model} & Batch size & 32, 64, \textbf{128}, 256, 512 \\
    & Learning rate & $10^{-3}$, $\cdots$, $\boldsymbol{10^{-1}}$, $\cdots$, $10^{3}$ \\
    & Learning rate decay & 0.99 \\ \midrule
    \multirow{2}{*}{Algorithm} & Learning steps & $\uptau=5$, $\uptau_1=1$, $\uptau_2=5$ \\
    & \# of groups & 2, \textbf{5}, 10, 15, 20, 30 \\ \bottomrule
\end{tabular}
\label{table:Hyperparameters}
\end{table}

\begin{itemize}[noitemsep,leftmargin=10pt,nosep]
\item
    \textbf{Model}: We searched the best batch size and learning rate for each model as follows. For the SR, 2NN and CNN models, the batch size was varied from $32$ to $512$ with an increment rate of $2$, and the learning rate\,($\eta$) was varied from $10^{-3}$ to $10^{3}$ with an increment rate of $10$. For the CNN model used for the CelebA dataset, the batch size was set to $5$, and the learning rate was set to $0.001$, as suggested by \citet{caldas2018leaf}.
\item
    \textbf{Algorithm}: FedAvg takes a single learning step\,($\uptau$) as its input, whereas HierFAVG and FedAvg-IC take two learning steps\,($\uptau_1$ and $\uptau_2$). To solely focus on the effects of communication, the product of all the learning steps of each algorithm was determined to be $5$, which was one of the suggested values by \citet{FedAvg}. In FedAvg-IC, the number of groups $\abs{\mathcal{K}}$ was varied in the range of $\bkt{2, 5, 10, 15, 20, 30}$\footnote{A sophisticated heuristic for determining the number of groups is an important issue in group federated learning, and we leave it as future work.}. We also note that, in Algorithm 1, the \textsc{k-Medoid\_Grouping} was performed only once at the beginning\,(Line 14), and the number of maximum steady steps was set to $1$\,(Line 19), which exhibited sufficiently high accuracy for the most of experiments despite the decreased optimization opportunities.
\item
    \textbf{Environment}: We determined the default environment parameters by adopting commonly used ones in the previous studies\,\citep{FedAvg,HierFAVG,AdaptiveFL,caldas2018leaf}.
\end{itemize}

In addition, to incorporate the unbalanced property\,\citep{FedAvg}, the number of classes per node or edge and the number of data examples per node were randomly sampled from the normal distribution.

\subsection{Evaluation Details}
\label{sec:EvaluationDetails}

\begin{figure}[t!]
\centering
    \begin{minipage}{.47\columnwidth}
        \includegraphics[width=\textwidth]{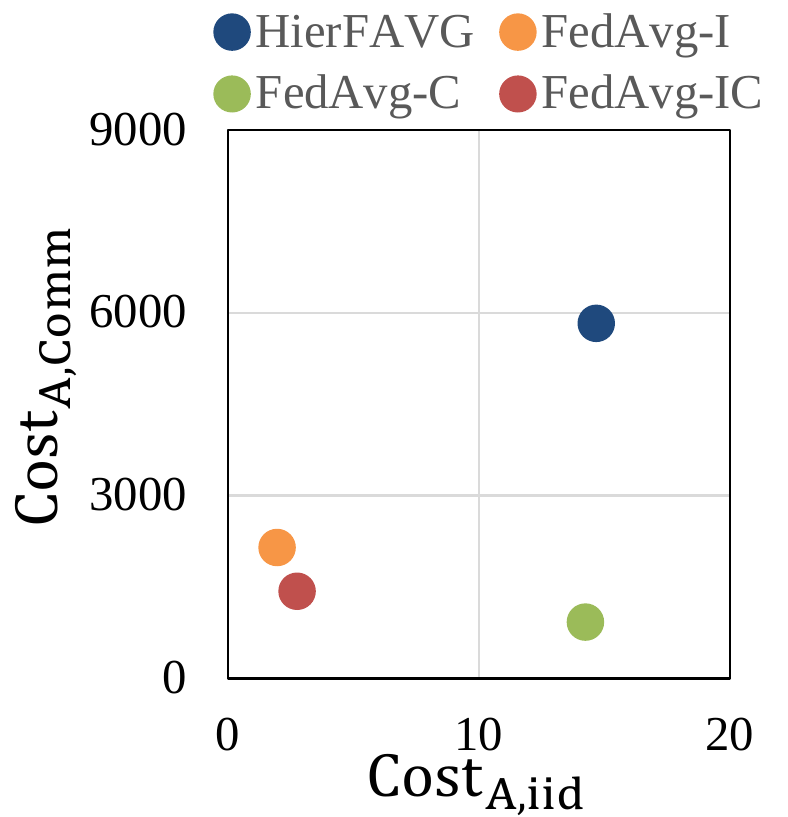}
        \caption{Cost analysis of the CNN on MNIST-F.}
        \label{fig:CostAnalysis}
    \end{minipage}
    \hspace{2pt}
    \begin{minipage}{.47\columnwidth}
        \includegraphics[width=\textwidth]{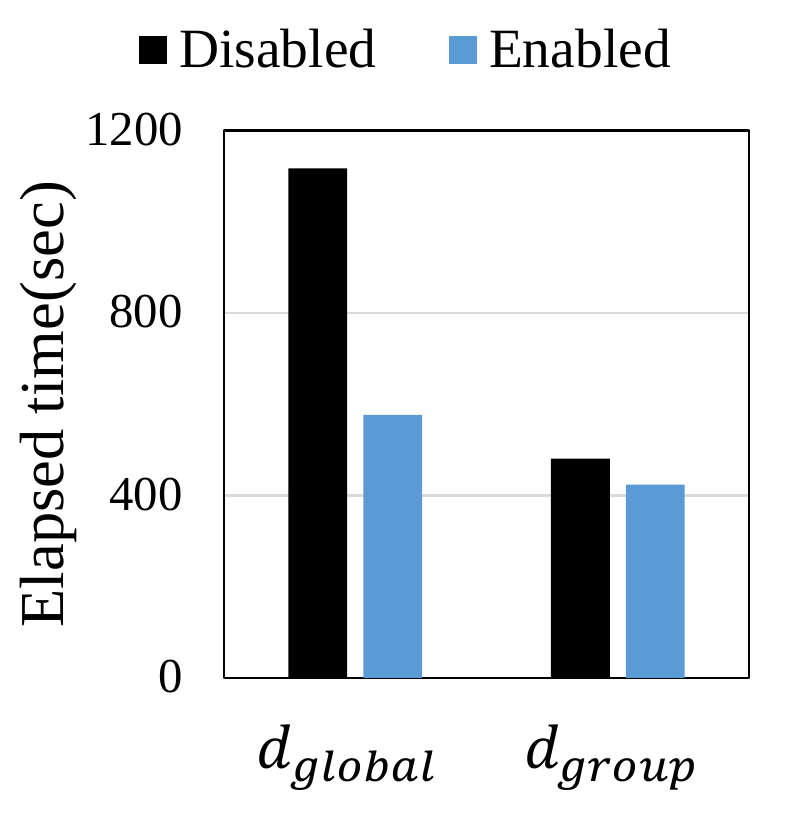}
        \caption{Effect of the combined aggregation.}
        \label{fig:EffectCombinedAggr}
    \end{minipage}
\end{figure}
The higher accuracy of FedAvg-IC in Section \ref{sec:AccuracyResults} is attributed to a decreased $\Delta$ in the IID cost in Eq.~\eqref{eq:Cost}. As shown in \autoref{fig:CostAnalysis} for the CNN on MNIST-F\,(\autoref{fig:accuracy-cnn-mnist_mnist-f}) with the \emph{Dtt} setting, FedAvg-IC significantly decreased the $\textsc{Cost}_{A,iid}$ that models $\Delta$. This conforms to Remark \ref{remark:Delta}.

The faster convergence speed of FedAvg-IC in Section \ref{sec:EfficiencyResults} is attributed to a decreased $d_{group}$ in Eq.~\eqref{eq:Cost} as well as a decreased communication data size by the combined aggregation. As shown in \autoref{fig:CostAnalysis}, FedAvg-IC significantly decreased the $\textsc{Cost}_{A,comm}$ that models $d_{group}$. This is consistent with Remark \ref{remark:Comm}. It should be also noted that FedAvg-IC finds a set of Pareto optimal solutions in \autoref{fig:CostAnalysis} whereas HierFAVG exhibits non-optimized costs. Furthermore, as shown in \autoref{fig:EffectCombinedAggr}, the communication time---especially, for the global aggregation---dropped rapidly when the combined aggregation was enabled.

\subsection{Additional Results}

\subsubsection{Effects of Different Simulation Settings}
\label{sec:EvalSimulation}

\urlstyle{rm}
{\bf \underline{Effects of Computation Settings}}:
We compared different processing speeds\,(5\footnote{This value is the average speed of Exynos 8895 in Samsung Galaxy S8. See \url{https://www.anandtech.com/show/11540/samsung-galaxy-s8-exynos-versus-snapdragon/2}.} and 250\footnote{This value is the average speed of PowerVR GT7600, the most widely-used smartphone GPU in 2019. See \url{https://deviceatlas.com/blog/most-used-smartphone-gpu}.} GFLOPS). As shown in \autoref{fig:EvalCommAndComp}, the results with a high processing speed\,(\autoref{fig:accuracy-cnn-mnist_mnist-o_link10_proc250} and \ref{fig:accuracy-cnn-mnist_mnist-o_link100_proc250}) exhibited higher accuracy than the ones with a low processing speed\,(\autoref{fig:accuracy-cnn-mnist_mnist-o_link10_proc5} and \ref{fig:accuracy-cnn-mnist_mnist-o_link100_proc5}), which is attributed to the increased number of epochs. Furthermore, we investigated the effects of different learning steps. As shown in \autoref{fig:LearningSteps}, the results with a small number of steps\,(\autoref{fig:product5}) exhibited higher accuracy than the ones with a large number of steps\,(\autoref{fig:product25}), because the convergence upper bound became larger with a  larger number of learning steps according to  Theorem~\ref{thm:ConvUpperBound}. It should be also noted that FedAvg-IC exhibited the lowest accuracy degradation, as indicated by the arrows in \autoref{fig:product25}; from Remark 1, when $\Delta$ is sufficiently minimized, other parameters hardly influence the convergence upper bound.

\begin{figure}[t!]
\captionsetup[subfigure]{justification=centering}
\centering
    \includegraphics[height=0.7cm]{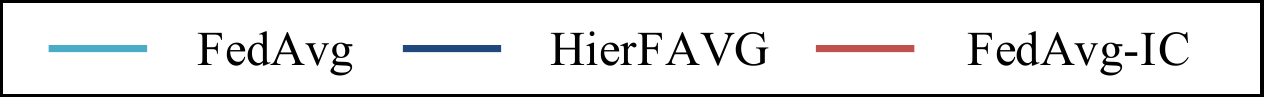} \\
    \includegraphics[width=\linewidth]{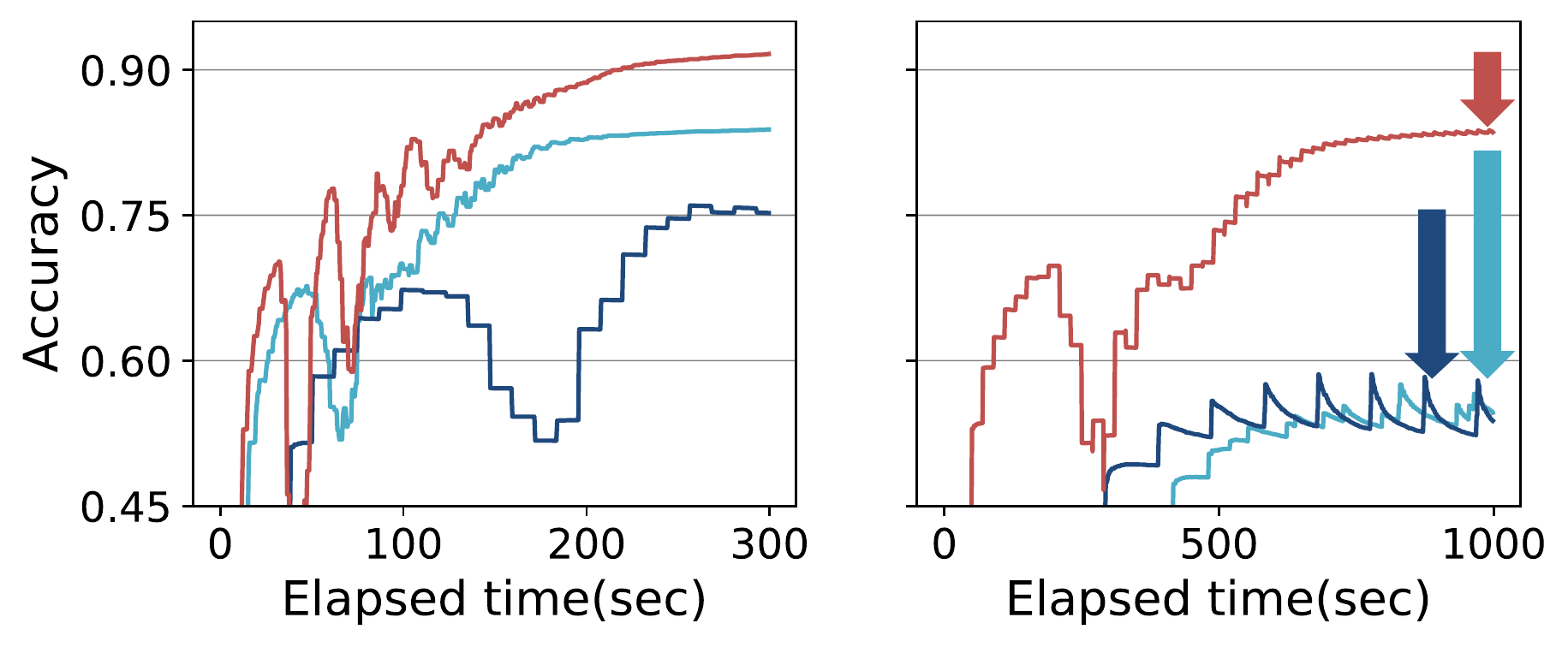} \\
    \begin{minipage}[t]{.50\linewidth}
    \centering
    \subcaption{A small number of steps\\($\uptau=5$ \& $\p{\uptau_1,\uptau_2}=\p{1,5}$).}\label{fig:product5}
    \end{minipage}
    \begin{minipage}[t]{.47\linewidth}
    \centering
    \subcaption{A large number of steps\\($\uptau=25$ \& $\p{\uptau_1,\uptau_2}=\p{5,5}$).}\label{fig:product25}
    \end{minipage}
\caption{Effects of learning steps for the CNN on MNIST-O\,(\emph{Dtt}).}
\label{fig:LearningSteps}
\end{figure}

{\bf \underline{Effects of Communication Settings}}: We compared different network types\,(fat tree and jellyfish) and link speeds\,(10 and 100 MBps\footnote{These values represent the state-of-the-art mobile connection speeds\,\citep{cisco}.}). As shown in \autoref{fig:EvalCommAndComp}, the results with the jellyfish\,(dashed lines) converged faster than the ones with the fat tree\,(solid lines), which is attributed to the increased throughput of the cost-efficient jellyfish network\,\citep{jellyfish}. In addition, the results with a high link speed\,(\autoref{fig:accuracy-cnn-mnist_mnist-o_link100_proc5} and \ref{fig:accuracy-cnn-mnist_mnist-o_link100_proc250}) converged faster than the ones with a low link speed\,(\autoref{fig:accuracy-cnn-mnist_mnist-o_link10_proc5} and \ref{fig:accuracy-cnn-mnist_mnist-o_link10_proc250}), as marked by small circles in the figures with a high link speed that represent the time taken to train the same number of epochs as with a low link speed.

\begin{figure*}[t!]
\captionsetup[subfigure]{justification=centering}
\centering
    \includegraphics[height=1.0cm]{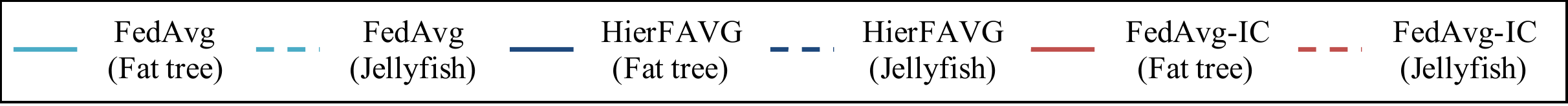} \\
    \includegraphics[width=\linewidth]{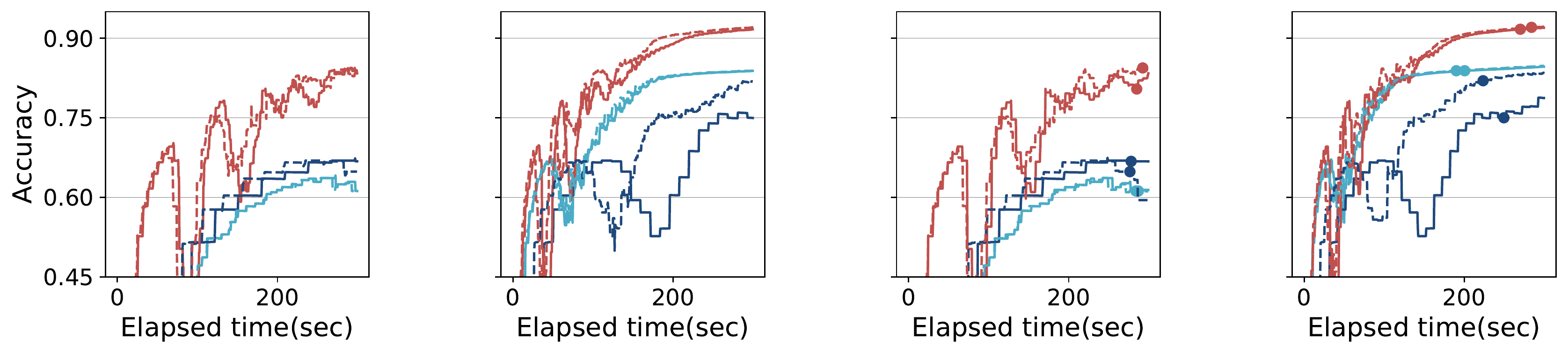} \\
    \begin{minipage}[t]{.30\linewidth}
    \centering
    \subcaption{Link speed=10\\Processing speed=5.}\label{fig:accuracy-cnn-mnist_mnist-o_link10_proc5}
    \end{minipage}
    \begin{minipage}[t]{.19\linewidth}
    \centering
    \subcaption{Link speed=10\\Processing speed=250.}\label{fig:accuracy-cnn-mnist_mnist-o_link10_proc250}
    \end{minipage}
    \begin{minipage}[t]{.31\linewidth}
    \centering
    \subcaption{Link speed=100\\Processing speed=5.}\label{fig:accuracy-cnn-mnist_mnist-o_link100_proc5}
    \end{minipage}
    \begin{minipage}[t]{.18\linewidth}
    \centering
    \subcaption{Link speed=100\\Processing speed=250.}\label{fig:accuracy-cnn-mnist_mnist-o_link100_proc250}
    \end{minipage}
\caption{Effects of different communication and computation settings for the CNN on MNIST-O\,(\emph{Dtt}). The standard deviation is not represented here to clearly convey the differences.}
\label{fig:EvalCommAndComp}
\end{figure*}

{\bf \underline{Effects of Data Distributions}}:
\autoref{fig:EvalDataDistribution} shows the accuracy results for different data distributions. As the variance of a distribution became higher\,(\autoref{fig:accuracy-cnn-mnist_mnist-o_normal-sd5} and \ref{fig:accuracy-cnn-mnist_mnist-o_normal-exp}), the learning curves for all algorithms became noisier. Nevertheless, FedAvg-IC still converged the fastest even with the noisier curves. We note that the mean of each data distribution does not need to be varied because, from the explanation of the data distribution in Section \ref{sec:ExperimentalSetting}, it was determined to be the number of classes or data examples per node.


\begin{figure*}[t!]
\centering
    \includegraphics[height=0.7cm]{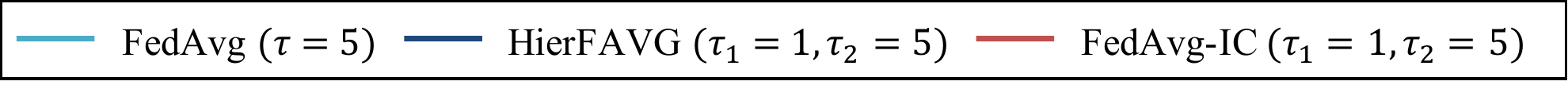} \\
    \includegraphics[width=\linewidth]{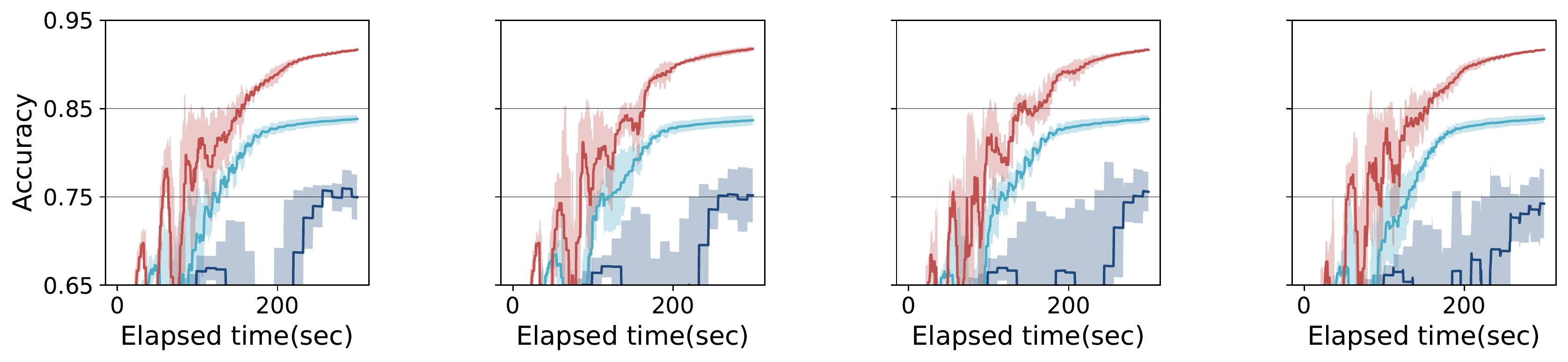} \\
    \begin{minipage}[t]{.30\linewidth}
    \centering
    \subcaption{Normal with SD=1.}\label{fig:accuracy-cnn-mnist_mnist-o_normal-sd1}
    \end{minipage}
    \begin{minipage}[t]{.20\linewidth}
    \centering
    \subcaption{Normal with SD=3.}\label{fig:accuracy-cnn-mnist_mnist-o_normal-sd3}
    \end{minipage}
    \begin{minipage}[t]{.29\linewidth}
    \centering
    \subcaption{Normal with SD=5.}\label{fig:accuracy-cnn-mnist_mnist-o_normal-sd5}
    \end{minipage}
    \begin{minipage}[t]{.19\linewidth}
    \centering
    \subcaption{Exponential.}\label{fig:accuracy-cnn-mnist_mnist-o_normal-exp}
    \end{minipage}
\caption{Effects of data distributions for the CNN on MNIST-O\,(\emph{Dtt}).}
\label{fig:EvalDataDistribution}
\end{figure*}

\subsubsection{Effects of the Degree of Non-IIDness}
\autoref{fig:ClassDiversity} shows the effects of class diversity\,(i.e., non-IIDness). In all cases, FedAvg-IC outperformed the state-of-the-art algorithms. When the data distribution in a node is non-IID\,(i.e., \emph{Dtt}, \emph{Dtq}, and \emph{Dth}), FedAvg that is a node-based learning did not work well. In contrast, when the data distribution in an edge is non-IID\,(i.e., \emph{Dtt}, \emph{Dtq}, and \emph{Dqq}), HierFAVG that is an edge-based learning did not work well. It should be noted that, because a node or an edge is very unlikely to be perfectly IID, the superiority of FedAvg-IC over the others will be valid in the real-world scenarios. Further evaluation with more realistic data distribution remains as future work.

\autoref{fig:ClassDiversity_mnist-f}, \autoref{fig:ClassDiversity_femnist}, and \autoref{fig:ClassDiversity_celeba} show the accuracy results on MNIST-F, FEMNIST, and CelebA, respectively. The overall trends are shown to be similar to \autoref{fig:ClassDiversity}. The learning curves are represented with the elapsed time in the subfigures (a)--(c) and with the number of epochs in the subfigures (d)--(f). The results with the \emph{Dqq}, \emph{Dqh}, and \emph{Dhh} settings are omitted because all curves closely overlap as before.

\begin{figure*}[t!]
\centering
    \includegraphics[height=0.7cm]{figure/legend2.pdf} \\
    \includegraphics[width=\linewidth]{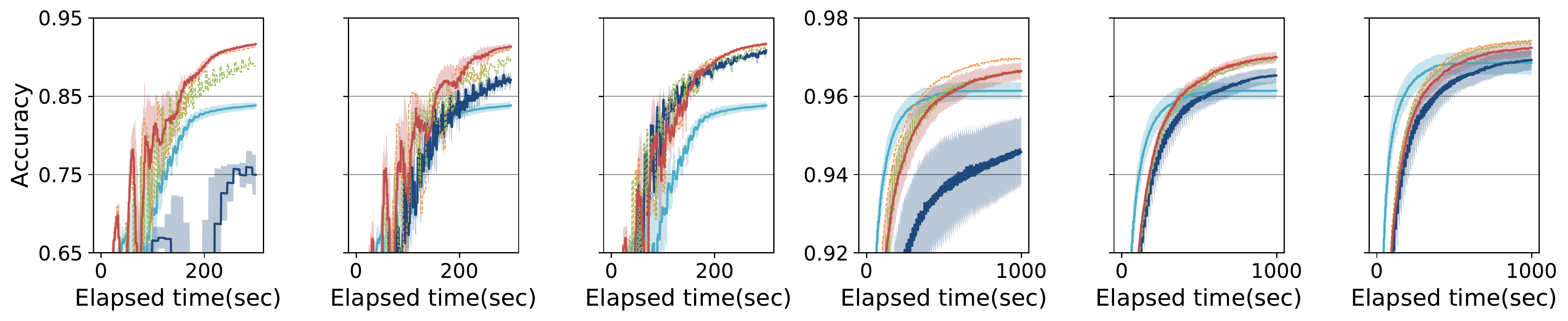} \\
    \begin{minipage}[t]{.20\linewidth}
    \centering
    \subcaption{Dtt.}\label{fig:accuracy-cnn-mnist_mnist-o-dtt}
    \end{minipage}
    \begin{minipage}[t]{.12\linewidth}
    \centering
    \subcaption{Dtq.}\label{fig:accuracy-cnn-mnist_mnist-o-dtq}
    \end{minipage}
    \begin{minipage}[t]{.19\linewidth}
    \centering
    \subcaption{Dth.}\label{fig:accuracy-cnn-mnist_mnist-o-dth}
    \end{minipage}
    \begin{minipage}[t]{.13\linewidth}
    \centering
    \subcaption{Dqq.}\label{fig:accuracy-cnn-mnist_mnist-o-dqq}
    \end{minipage}
    \begin{minipage}[t]{.18\linewidth}
    \centering
    \subcaption{Dqh.}\label{fig:accuracy-cnn-mnist_mnist-o-dqh}
    \end{minipage}
    \begin{minipage}[t]{.14\linewidth}
    \centering
    \subcaption{Dhh.}\label{fig:accuracy-cnn-mnist_mnist-o-dhh}
    \end{minipage}
\caption{Effects of class diversity for the CNN on MNIST-O (\autoref{fig:accuracy-cnn-mnist_mnist-o}): \emph{Dtt} Non-IID $\longleftrightarrow$ IID \emph{Dhh}.}
\label{fig:ClassDiversity}
\end{figure*}

\begin{figure*}[t!]
\centering
    \includegraphics[width=\linewidth]{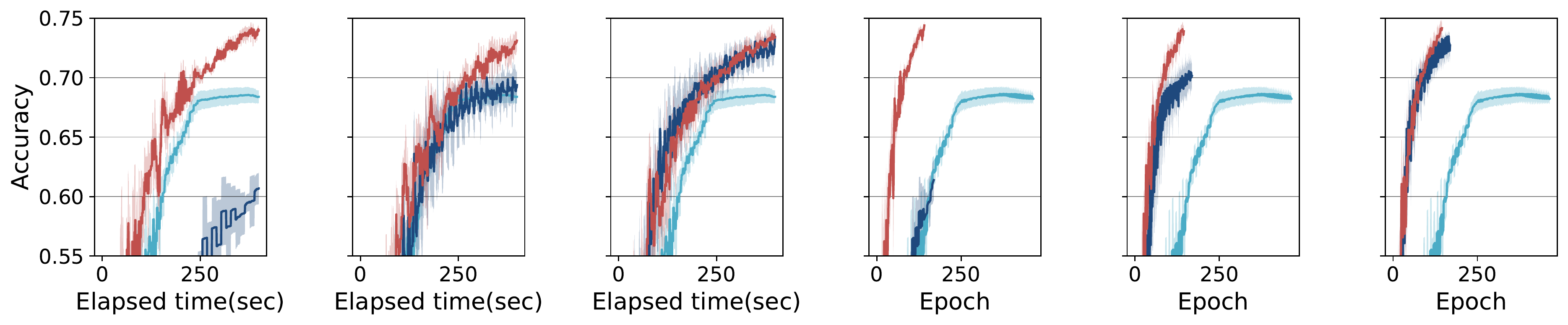} \\
    \begin{minipage}[t]{.21\linewidth}
    \centering
    \subcaption{Dtt-time.}\label{fig:accuracy-cnn-mnist_mnist-f-dtt-time}
    \end{minipage}
    \begin{minipage}[t]{.12\linewidth}
    \centering
    \subcaption{Dtq-time.}\label{fig:accuracy-cnn-mnist_mnist-f-dtq-time}
    \end{minipage}
    \begin{minipage}[t]{.20\linewidth}
    \centering
    \subcaption{Dth-time.}\label{fig:accuracy-cnn-mnist_mnist-f-dth-time}
    \end{minipage}
    \begin{minipage}[t]{.11\linewidth}
    \centering
    \subcaption{Dtt-epoch.}\label{fig:accuracy-cnn-mnist_mnist-f-dtt-epoch}
    \end{minipage}
    \begin{minipage}[t]{.21\linewidth}
    \centering
    \subcaption{Dtq-epoch.}\label{fig:accuracy-cnn-mnist_mnist-f-dtq-epoch}
    \end{minipage}
    \begin{minipage}[t]{.11\linewidth}
    \centering
    \subcaption{Dth-epoch.}\label{fig:accuracy-cnn-mnist_mnist-f-dth-epoch}
    \end{minipage}
\caption{Effects of class diversity for the CNN on MNIST-F: \emph{Dtt} Non-IID $\longleftrightarrow$ IID \emph{Dth}.}
\label{fig:ClassDiversity_mnist-f}
\end{figure*}

\begin{figure*}[t!]
\centering
    \includegraphics[width=\linewidth]{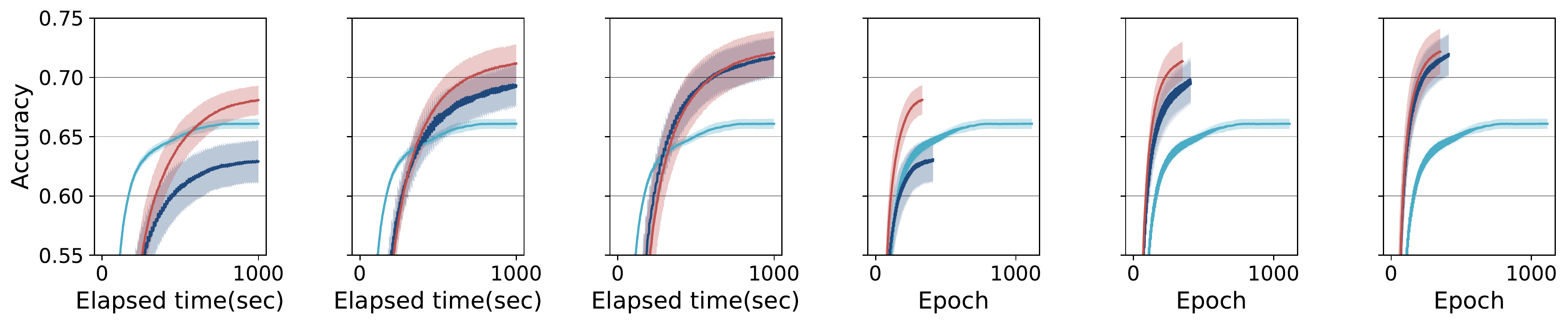} \\
    \begin{minipage}[t]{.21\linewidth}
    \centering
    \subcaption{Dtt-time.}\label{fig:accuracy-cnn-mnist_femnist-dtt-time}
    \end{minipage}
    \begin{minipage}[t]{.12\linewidth}
    \centering
    \subcaption{Dtq-time.}\label{fig:accuracy-cnn-mnist_femnist-dtq-time}
    \end{minipage}
    \begin{minipage}[t]{.20\linewidth}
    \centering
    \subcaption{Dth-time.}\label{fig:accuracy-cnn-mnist_femnist-dth-time}
    \end{minipage}
    \begin{minipage}[t]{.11\linewidth}
    \centering
    \subcaption{Dtt-epoch.}\label{fig:accuracy-cnn-mnist_femnist-dtt-epoch}
    \end{minipage}
    \begin{minipage}[t]{.21\linewidth}
    \centering
    \subcaption{Dtq-epoch.}\label{fig:accuracy-cnn-mnist_femnist-dtq-epoch}
    \end{minipage}
    \begin{minipage}[t]{.11\linewidth}
    \centering
    \subcaption{Dth-epoch.}\label{fig:accuracy-cnn-mnist_femnist-dth-epoch}
    \end{minipage}
\caption{Effects of class diversity for the CNN on FEMNIST: \emph{Dtt} Non-IID $\longleftrightarrow$ IID \emph{Dth}.}
\label{fig:ClassDiversity_femnist}
\end{figure*}

\begin{figure*}[t!]
\centering
    \includegraphics[width=\linewidth]{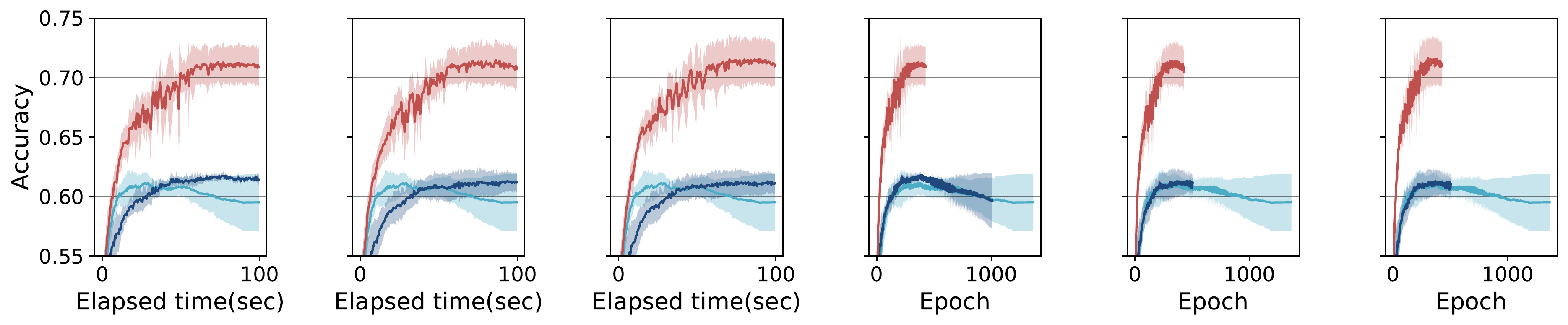} \\
    \begin{minipage}[t]{.21\linewidth}
    \centering
    \subcaption{Dtt-time.}\label{fig:accuracy-cnn-mnist_celeba-dtt-time}
    \end{minipage}
    \begin{minipage}[t]{.12\linewidth}
    \centering
    \subcaption{Dtq-time.}\label{fig:accuracy-cnn-mnist_celeba-dtq-time}
    \end{minipage}
    \begin{minipage}[t]{.20\linewidth}
    \centering
    \subcaption{Dth-time.}\label{fig:accuracy-cnn-mnist_celeba-dth-time}
    \end{minipage}
    \begin{minipage}[t]{.11\linewidth}
    \centering
    \subcaption{Dtt-epoch.}\label{fig:accuracy-cnn-mnist_celeba-dtt-epoch}
    \end{minipage}
    \begin{minipage}[t]{.21\linewidth}
    \centering
    \subcaption{Dtq-epoch.}\label{fig:accuracy-cnn-mnist_celeba-dtq-epoch}
    \end{minipage}
    \begin{minipage}[t]{.11\linewidth}
    \centering
    \subcaption{Dth-epoch.}\label{fig:accuracy-cnn-mnist_celeba-dth-epoch}
    \end{minipage}
\caption{Effects of class diversity for the CNN on CelebA: \emph{Dtt} Non-IID $\longleftrightarrow$ IID \emph{Dth}.}
\label{fig:ClassDiversity_celeba}
\end{figure*}

\end{document}